\documentclass[12pt]{article}

\usepackage{amsmath,amssymb,amsthm,mathtools,bbm}
\usepackage{geometry}
\usepackage{enumitem}
\usepackage{setspace}
\usepackage{float}
\usepackage{threeparttable}
\usepackage{booktabs}
\usepackage{graphicx}
\usepackage{float} 
\usepackage{natbib}

\usepackage{algorithm}
\usepackage{algpseudocode}
\usepackage{float}  

\title{
Optimal Policy Learning \\
under Budget and Coverage Constraints
\thanks{
I would like to express my sincere gratitude to Charles Manski, Fabrizia Mealli, Roy Cerqueti, and Franco Peracchi for their valuable comments and insightful suggestions. Needless to say, any remaining errors or omissions remain the sole responsibility of the author.
}
\thanks{
The development of this paper was supported by FOSSR (Fostering Open Science in Social Science Research), a project funded by the European Union -- NextGenerationEU under the PNRR Grant Agreement No. MURIR0000008.
}
}

\author{Giovanni Cerulli \\ CNR-IRCrES \\ National Research Council of Italy \\ Research Institute on Sustainable Economic Growth}
\date{}

\newtheorem{assumption}{Assumption}
\newtheorem{definition}{Definition}
\newtheorem{theorem}{Theorem}
\newtheorem{proposition}{Proposition}
\newtheorem{corollary}{Corollary}
\newtheorem{lemma}{Lemma}
\newtheorem{remark}{Remark}
\usepackage{amsthm}

\begin{document}
\maketitle

\begin{abstract}
\noindent
\footnotesize{We study optimal policy learning under combined budget and minimum coverage constraints. We show that the problem admits a knapsack-type structure and that the optimal policy can be characterized by an affine threshold rule involving both budget and coverage shadow prices. 
We establish that the linear programming relaxation of the combinatorial solution has an $O(1)$ integrality gap, implying asymptotic equivalence with the optimal discrete allocation. Building on this result, we analyze two implementable approaches: a Greedy--Lagrangian (GLC) and a rank-and-cut (RC) algorithm.
We show that the GLC closely approximates the optimal solution and achieves near-optimal performance in finite samples. By contrast, RC is approximately optimal whenever the coverage constraint is slack or costs are homogeneous, while misallocation arises only when cost heterogeneity interacts with a binding coverage constraint. Monte Carlo evidence supports these findings.}
\end{abstract}

\section{Introduction}

Decision makers often face the problem of allocating limited resources across a given set of units. Examples include assigning training programs, medical treatments, or social benefits. In these settings, individuals may differ both in their expected gains from treatment and in the cost of being treated. At the same time, policymakers face realistic constraints as, for example, a limited budget to spend, as well as a minimum coverage requirement to ensure that a sufficient fraction of the population is reached by the intervention.

\bigskip
\noindent
To be concrete, consider a public employment agency that must decide how to allocate a training program among unemployed individuals. Some individuals are expected to benefit substantially from training; for instance, younger workers with some prior education may gain more, while others may benefit less. At the same time, the cost of providing training is not the same for everyone, as individuals with lower initial skills or more fragile backgrounds may require longer and more intensive, and therefore more costly, programs.

\bigskip
\noindent
The agency cannot simply treat everyone, because it operates under a limited budget. It must therefore decide whom to prioritize. However, policy objectives are not purely efficiency-driven. In many real-world settings, there is also a requirement to reach a sufficiently large share of the population to ensure, for example, fairness, political acceptability, or compliance with program guidelines. This means that the agency cannot focus only on a small group of high-return individuals but must also include a broader segment of the population.

\bigskip
\noindent
In this situation, the decision problem becomes non-trivial. Targeting only those with the highest expected gains may violate the coverage requirement, while expanding treatment to meet coverage may force the agency to include individuals with lower expected benefits or higher costs. The policymaker must therefore balance three elements simultaneously: (i) how much individuals are expected to benefit, (ii) how costly they are to treat, and (iii) how many people must be reached.

\bigskip
\noindent
This paper studies optimal policy learning under a budgetary and a coverage constraint. The policy maker must decide which units to treat in order to maximize total gains, while respecting both a budget and a minimum coverage requirement. Although the problem is conceptually simple, in real-world settings it gives rise to a \textit{combinatorial optimization} problem, as the policy maker must select the best subset of units from an exponentially large set of alternatives. In particular, when treatment must be allocated among $N$ units, the number of possible allocations is $2^{N}$, which grows exponentially with the number of units.

\bigskip
\noindent
In this context, the optimal solution can be in principle computed using \textit{mixed-integer linear programming} (MILP). However, such methods do not scale well with the sample size and become impractical in large-scale applications, when the number of units is huge. Unlike linear programming (LP), which can be solved in polynomial time, MILP is NP-hard (\cite{NemhauserWolsey1988}; \cite{Wolsey1998}; \citet{Schrijver2003}; \citet{Vielma2015}). This raises a central issue of how to approximate the optimal policy using simple and scalable rules, without solving the full combinatorial problem.

\bigskip
\noindent
The main contribution of this paper is to show that simpler solvers are feasible. To this end, we first show that the constrained policy learning problem has a \textit{knapsack-type} problem structure and admits a simple threshold-based characterization. We then establish that the linear programming (LP) relaxation of the knapsack problem has an $O(1)$ integrality gap, that is, the gap between the relaxed and the optimal discrete solution vanishes in per-capita terms as the sample size grows. This result provides a theoretical foundation for the use of simpler allocation rules (i.e., \textit{greedy} algorithms).

\bigskip
\noindent
Building on this result, we study two \textit{greedy} implementable selection algorithms. The first is a \textit{Greedy-Lagrangian} rule (GLC), which assigns individuals based on a score that balances treatment effects and costs. The second is a \textit{rank-and-cut} rule (RC), which ranks individuals according to their benefit-cost ratio and constructs a feasible allocation through a simple threshold-based selection procedure. Both rules are computationally easy to apply, and scale non-exponentially with the sample size. We show that the GLC closely approximates the optimal allocation, while the RC provides an even simpler and intuitive alternative based on cost-effectiveness ranking. Despite its simplicity, the RC rule can however produce relevant misallocations when costs are highly heterogeneous and the coverage requirement is binding. Importantly, both rules avoid solving the underlying combinatorial optimization problem.

\bigskip
\noindent
We complement the theoretical analysis with two separate Monte Carlo studies, one for GLC and one for RC. The simulations confirm the main predictions of the theory. The integrality gap of the LP relaxation rapidly vanishes as the sample size increases, providing a tight benchmark for comparison. The results for GLC show that it consistently delivers near-optimal allocations and exhibits stable performance even in finite samples. The analysis of RC, on the contrary, highlights that misallocations emerge when cost heterogeneity is substantial and the coverage constraint is binding, in line with the theoretical predictions.

\medskip
\noindent
The paper makes three main contributions. First, it provides a structural representation of constrained policy learning as a \textit{knapsack-type} problem with a threshold-rule characterization. Second, it establishes that the LP relaxation is asymptotically tight in per-capita terms. Third, it proposes and analyzes simple allocation rules that are near-optimal and scalable, under mild and verifiable conditions.

\medskip
\noindent
The remainder of the paper is organized as follows. 
Section 2 introduces the policy learning problem under budget and minimum coverage constraints and presents its population-level formulation. 
Section 3 provides the primal--dual characterization of the optimal policy and derives the associated threshold rule. 
Section 4 studies the empirical counterpart of the problem and shows that it can be formulated as a constrained knapsack problem. 
Section 5 analyzes the linear programming relaxation and establishes the $O(1)$ integrality gap and asymptotic equivalence with the discrete solution. 
Section 6 introduces the Greedy--Lagrangian (GLC) algorithm and studies its computational and theoretical properties. 
Section 7 characterizes the rank-and-cut (RC) rule and derives conditions under which it approximates the optimal policy. 
Section 8 presents Monte Carlo evidence illustrating the theoretical results. 
Section 9 concludes. 
Additional technical details and numerical illustrations are provided in the Appendix.

\section{Related Literature}

This paper relates to several strands of literature at the intersection of causal inference, empirical welfare maximization, and constrained decision-making.

\bigskip
\noindent
A natural starting point is the literature on statistical treatment rules, which frames policy assignment as an optimization problem under uncertainty. \citet{Manski2004} provides a foundational formulation in which treatment rules are chosen to maximize expected welfare when the data-generating process is unknown. Building on this approach, \citet{KitagawaTetenov2018} introduce empirical welfare maximization (EWM), which selects policies by maximizing sample analogues of welfare within a restricted policy class. \citet{AtheyWager2021} extend this framework to observational data using doubly robust methods and provide regret guarantees, while \citet{MbakopTabordMeehan2021} further generalize the approach through penalized welfare maximization. These contributions establish a statistical foundation for data-driven policy learning.

\bigskip
\noindent
A line of research strictly related to our paper focuses on treatment assignment under resource constraints. \citet{BhattacharyaDupas2012} study optimal treatment assignment under a binding budget constraint, showing that the welfare-maximizing policy takes the form of a threshold rule based on individual treatment effects (or cost-adjusted gains when costs are heterogeneous). They characterize the optimal allocation at the population level and develop inference procedures for welfare and targeting rules using experimental data. \citet{MbakopTabordMeehan2021} extend the empirical welfare maximization framework by introducing a penalized decision rule (PWM) that performs model selection over policy classes. They derive oracle inequalities and regret bounds that show how the procedure adaptively balances approximation and estimation error when the policy class is large or complex. \citet{Sun2026EWMConstraints} studies empirical welfare maximization under budget constraints when costs are unknown and must be estimated. The author highlights a fundamental trade-off between welfare efficiency and feasibility, proving that no statistical rule can achieve both uniformly over a rich class of data-generating processes. It also proposes alternative decision rules, including conservative and trade-off rules, to handle budget uncertainty. Relatedly, \citet{CarneiroLeeWilhelm2020} consider optimal data collection under resource limitations. All these contributions emphasize that budget constraints fundamentally alter the structure of the policy problem, especially when costs are heterogeneous or must be estimated.

\bigskip
\noindent
From a methodological perspective, our contribution differs substantially from the previous literature on empirical welfare maximization under constrains in three key ways: 

\begin{itemize}
\item First, we introduce an additional minimum coverage requirement, which interacts nontrivially with the budget constraint. This leads to a richer Lagrangian structure in which the optimal policy is characterized by an \textit{affine} threshold involving two shadow prices, rather than a single cutoff. Unlike the budget-only case, the coverage constraint induces a heterogeneous shift in the decision boundary, which plays a central role in both the structure and the analysis of the optimal rule.

\item Second, instead of focusing mainly on a population-level characterization of the optimal policy under known primitives, we study the problem of constructing and implementing such policies in practice. We show that the empirical counterpart of constrained optimal rules naturally gives rise to a discrete optimization problem that can be formulated as a constrained \textit{knapsack-type} problem. This perspective makes explicit the computational challenges that are implicit in the original framework.

\item Third, we analyze tractable policy rules based on ranking and provide theoretical guarantees for their performance. In particular, we study their approximation properties relative to the optimal allocation and establish conditions under which simple, implementable rules achieve near-optimality. In this sense, while the existing literature characterizes statistically the optimal policy, our contribution is to bridge the gap between characterization and implementation, providing both a computational framework and performance guarantees for feasible policy learning under more than one constraint.
\end{itemize}

\bigskip
\noindent
Our work is also related to the broader literature on combinatorial optimization and its interaction with machine learning \citep{BengioLodiProuvost2018,ZhangEtAl2022,GarnAmirghasemi2025}. While this literature focuses primarily on algorithmic and computational aspects, we show that similar structures arise naturally in policy learning problems with realistic constraints. In particular, the knapsack formulation emerges directly from the economic primitives of the model.

\bigskip
\noindent
Finally, our analysis connects to statistical learning theory through the use of \textit{margin conditions} (more later). Following the intuition of \citet{Tsybakov2004} and \citet{AudibertTsybakov2007}, we show that misallocation errors induced by RC approximate policies are concentrated near the decision boundary. This allows us to characterize the welfare loss of simple rules, such as our proposed ranking-based policies, and to establish conditions under which they are close to the optimal.

\section{The constrained welfare maximization problem}

\subsection{Potential outcomes framework}

Let $Z = (X,Y,W)$ denote a random vector defined on a probability space, where $X \in \mathcal{X}$ is a vector of covariates, $W \in \{0,1\}$ is the treatment indicator, and $Y \in \mathbb{R}$ is the outcome. We assume i.i.d. sampling:
\[
Z_1,\dots,Z_n \overset{i.i.d.}{\sim} P
\]
where $P$ is the true joint distribution from which data are drawn.
By $P_{X}$, let us indicate the marginal distribution of $X$. We define two potential outcomes $Y(0)$ and $Y(1)$ satisfying the consistency relation (\textit{Rubin's identity}) $Y=Y(0)+W \cdot [Y(1)-Y(0)]$. We make two standard assumptions.

\begin{assumption}[\textit{Unconfoundedness}]
Conditional on the observed covariates $X$, the treatment assignment $W$ is independent of the potential outcomes $Y(0)$ and $Y(1)$:
\[
(Y(0),Y(1)) \perp W \mid X
\]
\end{assumption}

\begin{assumption}[\textit{Overlap or Positivity}]
For every covariate profile $x$ with positive probability, the probability of receiving treatment is bounded away from zero and one. Formally, there exists a constant $\eta \in (0,1/2)$ such that
\[
\eta \le e(x) := P(W=1 \mid X=x) \le 1-\eta
\quad \text{for $P_X$-a.e.\ } x.
\]
\end{assumption}

\noindent
Given these assumptions, we define the (population) conditional average treatment effect (CATE) as
\[
\tau(x)=\mathbb{E}[Y(1)-Y(0)\mid X=x]
\]
which measures the benefit of the policy for units characterized by features \textit{x}. \newline

\noindent
Finally, let $c:\mathcal{X}\to(0,\infty)$ be a measurable cost function. In our baseline analysis $c(x)$ is treated as known. Extensions to estimated costs can be handled similarly by adding a nuisance component (not in this paper).

\subsection{Policies and constrained welfare}

We begin by defining the key building blocks of our framework.

\begin{definition}[Policy]
A policy is a measurable map $\pi:\mathcal{X}\to\{0,1\}$. We denote by $\Pi$ a given class of policies.
\end{definition}

\noindent
Replacing $W$ with $\pi(X)$ in the Rubin's identity, yields
\[
Y(\pi(X)) = Y(0) + \pi(X)\,[Y(1)-Y(0)].
\]
Taking expectation, the \textit{welfare} generated by policy $\pi$ is
\[
\mathbb{E}[Y(\pi(X))]
=
\mathbb{E}\big[Y(0)\big]
+
\mathbb{E}\big[\pi(X)\,(Y(1)-Y(0))\big].
\]

\noindent
Applying the law of iterated expectations with respect to $X$, we obtain
\[
\mathbb{E}\big[\pi(X)\,(Y(1)-Y(0))\big]
=
\mathbb{E}\Big[
\mathbb{E}\big[\pi(X)\,(Y(1)-Y(0)) \mid X\big]
\Big] = \mathbb{E}\Big[
\pi(X)\,\mathbb{E}\big[Y(1)-Y(0)\mid X\big]
\Big]
\]
implying that
\[
\mathbb{E}[Y(\pi(X))]
=
\mathbb{E}[Y(0)]
+
\mathbb{E}\big[\pi(X)\tau(X)\big].
\]

\medskip
\noindent
Since $\mathbb{E}[Y(0)]$ does not depend on the policy $\pi$, maximizing $\mathbb{E}[Y(\pi)]$ is equivalent to maximizing
\[
V(\pi(X))=\mathbb{E}[\pi(X)\tau(X)].
\]

\begin{definition}[Population welfare]
For any $\pi\in\Pi$, define the population welfare as
\[
V(\pi(X))=\mathbb{E}[\pi(X)\tau(X)].
\]
\end{definition}
\noindent
which clearly depends on the policy and on the CATE. \newline

\noindent
We impose two constraints, a \textit{budget} and a \textit{coverage} constraint.

\begin{enumerate}
\item \textit{Budget constraint}. Assume that we have a total budget amount of $B$. If the number of units is $N$, the budget per person is $C = B/N$, with $C>0$. The budget constraint is defined as
\[
\mathbb{E}[\pi(X)c(X)] \le C.
\]

\item \textit{Coverage constraint}. Assume that we want to treat at least a certain percentage $\rho$ of the population, with $\rho\in(0,1)$. The coverage constraint is defined as
\[
\mathbb{E}[\pi(X)] \ge \rho.
\]

\end{enumerate}

\begin{definition}[Feasible policy set and constrained value functional]
Given a policy class $\Pi$, define the \emph{feasible set} under distribution $P$ as
\[
\Pi_{\mathrm{feas}}(P)
=
\left\{
\pi \in \Pi :
\mathbb{E}[\pi(X)c(X)] \le C,
\ \mathbb{E}[\pi(X)] \ge \rho
\right\},
\]
\end{definition}
\noindent
that is, the subset of policies satisfying the budget constraint and the minimum coverage requirement.

\bigskip

\noindent
Finally, define the \emph{constrained value functional} as
\[
\Psi(P)
=
\sup_{\pi \in \Pi_{\mathrm{feas}}(P)} V(\pi),
\]
Any maximizer, when it exists, is denoted
\[
\pi^* \in \arg\max_{\pi \in \Pi_{\mathrm{feas}}(P)} V(\pi)
\]
to be found out within the set $\Pi_{\mathrm{feas}}(P)$, with 
\[
\Psi(P) = V_P\big(\pi^*(P)\big).
\]

\subsection{Feasibility and interior points}

Feasibility requires that the budget constraint be sufficiently \textit{slack} (i.e. sufficiently large) to accommodate the minimum coverage requirement, ensuring that a mass of at least $\rho$ can be treated.

\begin{assumption}[\textit{\textit{Nonemptiness}}]
The feasible policy set is nonempty, that is
\[
\Pi_{\mathrm{feas}}(P)\neq\varnothing.
\]
\end{assumption}
\noindent
This condition requires that the budget and minimum coverage constraints be jointly compatible under the distribution $P$, so that at least one admissible policy satisfies both restrictions. \newline

\noindent
To proceed with value-function maximization, we rely on \textit{strong duality}, ensuring that the \textit{primal} (original) and \textit{dual} (Lagrangian) problems coincide in optimal value. 
We impose a \textit{Slater condition}, requiring the existence of a strictly feasible policy that satisfies the constraints with strict inequalities, so that the feasible set has a nonempty interior.

\begin{assumption}[Slater condition]
There exists a policy $\tilde{\pi} \in \Pi$ such that
\[
\mathbb{E}[\tilde{\pi}(X)c(X)] < C
\quad \text{and} \quad
\mathbb{E}[\tilde{\pi}(X)] > \rho.
\]
\end{assumption}

\noindent
This assumption requires the existence of at least one strictly feasible policy satisfying both constraints with slack. In particular, the budget constraint and the minimum coverage requirement are not simultaneously binding at the boundary of the admissible set.


\section{Population Lagrangian characterization}

\noindent
To better understand the structure of our welfare constrained optimization problem, and thus characterize the optimal policy solution, we derive a population-level primal-dual characterization. Based on the assumptions made in the previous section, we introduce Lagrange multipliers for the budget and coverage constraints, converting the constrained welfare maximization problem into a classical Lagrangian formulation. This representation reveals that, when the policy class is sufficiently rich, the optimal rule can be characterized \textit{pointwise} and takes the form of an \textit{affine threshold} in the adjusted treatment effect. The resulting structure provides a transparent economic interpretation of the constrained optimum and simplifies further the analysis.

\subsection{Lagrangian formulation}

Rewrite the two constraints as follows (for notational convenience, we suppress the dependence on $X$)
\[
g_1(\pi)=\mathbb{E}[\pi c]-C\le 0,
\qquad
g_2(\pi)=\rho-\mathbb{E}[\pi]\le 0.
\]
For multipliers $(\lambda,\nu)\in\mathbb{R}_+^2$, define the Lagrangian
\begin{align}
\mathcal{L}(\pi,\lambda,\nu)
&=
\mathbb{E}[\pi\tau]
-\lambda(\mathbb{E}[\pi c]-C)
-\nu(\rho-\mathbb{E}[\pi]) \nonumber\\
&=
\mathbb{E}\big[\pi(X)\big(\tau(X)-\lambda c(X)+\nu\big)\big]
+\lambda C-\nu\rho.
\label{eq:L}
\end{align}
Define the \textit{dual function} as
\[
g(\lambda,\nu)=\sup_{\pi\in\Pi}\mathcal{L}(\pi,\lambda,\nu),
\]
that is, the maximum value of the Lagrangian over policies for given multipliers $(\lambda,\nu)$. The associated dual problem
\[
\inf_{\lambda,\nu\ge0} g(\lambda,\nu)
\]
searches for the tightest upper bound on the optimal constrained value and, when attained, yields the optimal Lagrange multipliers. \newline

\noindent
If $\Pi$ is sufficiently rich, the inner maximization is pointwise. Indeed, since the Lagrangian is linear in $\pi(x)$ and $\pi(x)\in\{0,1\}$ can be chosen independently across $x$, the maximization reduces to setting $\pi(x)=1$ whenever $\tau(x)-\lambda c(x)+\nu\ge0$, and $\pi(x)=0$ otherwise. \newline

\begin{lemma}[Pointwise maximization]
Suppose $\Pi$ contains all measurable $\{0,1\}$-valued functions. Fix $(\lambda,\nu)\ge0$. Then any maximizer of $\pi\mapsto \mathcal{L}(\pi,\lambda,\nu)$ satisfies
\[
\pi_{\lambda,\nu}(x)=\mathbbm{1}\{\tau(x)-\lambda c(x)+\nu\ge0\}
\quad \text{for $P_X$-a.e.\ } x.
\]
\end{lemma}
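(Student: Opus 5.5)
The plan is to reduce the problem to a pointwise comparison of integrands. Fix $(\lambda,\nu)\in\mathbb{R}_+^2$ and write the adjusted score $s(x):=\tau(x)-\lambda c(x)+\nu$. By \eqref{eq:L},
\[
\mathcal{L}(\pi,\lambda,\nu)=\mathbb{E}[\pi(X)s(X)]+\lambda C-\nu\rho .
\]
The term $\lambda C-\nu\rho$ does not depend on $\pi$. So maximizing $\mathcal{L}$ over $\Pi$ is the same as maximizing $\pi\mapsto\mathbb{E}[\pi(X)s(X)]$. First I will make sure this functional is well defined. I will assume, as is implicit in the paper's setup, that $\mathbb{E}|\tau(X)|<\infty$ and $\mathbb{E}[c(X)]<\infty$. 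Then $s\in L^1(P_X)$, and $\mathbb{E}[\pi s]$ is finite for every measurable $\pi$ with values in $\{0,1\}$.

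Second, I will show that $\pi_{\lambda,\nu}=\mathbbm{1}\{s\ge 0\}$ attains the supremum. It is measurable, since $\tau$ and $c$ are measurable, so it belongs to $\Pi$ by the richness assumption. For every $x$ and every $\pi(x)\in\{0,1\}$ we have $\pi(x)s(x)\le s(x)^+=\pi_{\lambda,\nu}(x)s(x)$. Taking expectations gives $\mathbb{E}[\pi s]\le\mathbb{E}[s^+]=\mathbb{E}[\pi_{\lambda,\nu}s]$. Hence $g(\lambda,\nu)=\mathbb{E}[s(X)^+]+\lambda C-\nu\rho$, and $\pi_{\lambda,\nu}$ is a maximizer.

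Third, I will prove the converse: any maximizer must agree with the threshold rule. Let $\pi$ be a maximizer and set $D(x):=(\pi_{\lambda,\nu}(x)-\pi(x))s(x)$. I will check $D\ge0$ pointwise by cases. If $s(x)>0$, then $\pi_{\lambda,\nu}(x)=1\ge\pi(x)$. If $s(x)<0$, then $\pi_{\lambda,\nu}(x)=0\le\pi(x)$. If $s(x)=0$, then $D(x)=0$. Optimality of $\pi$ gives $\mathbb{E}[D]=0$. Since $D\ge0$, this forces $D=0$ $P_X$-a.s. Therefore $\pi(x)=\pi_{\lambda,\nu}(x)$ for $P_X$-a.e. $x$ in $\{s\neq0\}$.

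The main obstacle is the tie set $\{x:s(x)=0\}$. On this set both values of $\pi(x)$ give the same Lagrangian, so the uniqueness claim in the lemma can only hold modulo this set. I would handle it in one of two ways. One is to state the conclusion as ``$\pi=\pi_{\lambda,\nu}$ $P_X$-a.e.\ on $\{s\neq0\}$, with arbitrary choice on ties,'' noting that the convention $\ge$ selects one maximizer. The other is to add the mild no-ties condition $P_X(\tau(X)-\lambda c(X)+\nu=0)=0$, under which the statement holds exactly as written. I expect to adopt the second as a remark. Ties matter later, because randomization on $\{s=0\}$ may be needed to make the budget or coverage constraint bind exactly.
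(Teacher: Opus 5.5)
Your proof is correct and follows the same pointwise-comparison route as the paper, which simply notes that the integrand $\pi(x)(\tau(x)-\lambda c(x)+\nu)$ is maximized by setting $\pi(x)=1$ exactly when the coefficient is nonnegative. Your version is more careful than the paper's one-line argument, which tacitly sets $\pi(x)=1$ on the tie set $\{\tau-\lambda c+\nu=0\}$. You are right that the ``any maximizer'' claim holds only off that set unless one adds a no-ties condition such as $P_X(\tau(X)-\lambda c(X)+\nu=0)=0$, analogous to the nondegeneracy condition the paper imposes later in the LP section.
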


\begin{proof}
For fixed $(\lambda,\nu)$, the integrand in \eqref{eq:L} is
$\pi(x)\big(\tau(x)-\lambda c(x)+\nu\big)$.
Since $\pi(x)\in\{0,1\}$ pointwise, the maximizer chooses $\pi(x)=1$ whenever the coefficient is nonnegative and $\pi(x)=0$ otherwise. Measurability follows from measurability of $\tau$ and $c$.
\end{proof}

\noindent
The following theorem provides a formal characterization of the primal-dual structure, building on the preceding steps. \newline

\begin{theorem}[Primal--dual characterization]
Assume nonemptiness and Slater. Suppose $\Pi$ contains all measurable binary rules. Then strong duality holds:
\[
\sup_{\pi\in\Pi_{\mathrm{feas}}(P)}V(\pi)
=
\inf_{\lambda,\nu\ge0}\sup_{\pi\in\Pi}\mathcal{L}(\pi,\lambda,\nu).
\]
Moreover, there exist optimal multipliers $(\lambda^*,\nu^*)$ and an optimal policy $\pi^*$ such that
\[
\pi^*(x)=\mathbbm{1}\{\tau(x)-\lambda^*c(x)+\nu^*\ge0\}
\quad \text{for $P_X$-a.e.\ } x,
\]
and complementary slackness holds:
\[
\lambda^*(\mathbb{E}[\pi^*c]-C)=0,
\qquad
\nu^*(\rho-\mathbb{E}[\pi^*])=0.
\]
\end{theorem}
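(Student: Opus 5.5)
The plan is to convexify the problem, apply finite-dimensional convex duality, and then return to binary rules. The main difficulty is that $\Pi$ consists of $\{0,1\}$-valued rules, so the primal problem is not convex and standard Lagrangian duality does not apply directly.

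\textbf{Step 1 (relaxation).} Throughout I assume $\tau$ and $c$ are $P_X$-integrable, which is needed for $V$ and the constraints to be well defined. I consider the relaxed class $\bar\Pi$ of measurable maps $\pi:\mathcal{X}\to[0,1]$. This set is convex, and the objective $\mathbb{E}[\pi\tau]$ and both constraint functionals $g_1,g_2$ are linear in $\pi$. The relaxed problem is therefore an infinite-dimensional linear program with only two inequality constraints.

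\textbf{Step 2 (duality for the relaxation).} I map $\bar\Pi$ into $\mathbb{R}^3$ through
\[
\pi\mapsto\bigl(\mathbb{E}[\pi\tau],\,\mathbb{E}[\pi c],\,\mathbb{E}[\pi]\bigr),
\]
whose image $\mathcal{K}$ is convex. I then apply the standard separating-hyperplane argument to the set
\[
\{(v,u_1,u_2): \exists k\in\mathcal{K},\ v\le k_1,\ u_1\ge k_2-C,\ u_2\ge \rho-k_3\}.
\]
This set is convex, and the point $(\bar\Psi+\varepsilon,0,0)$ lies outside it for every $\varepsilon>0$, where $\bar\Psi$ is the relaxed value. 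Slater, applied to $\tilde\pi\in\Pi\subset\bar\Pi$, rules out a vertical separating hyperplane. This yields $(\lambda^*,\nu^*)\ge0$ with
\[
\bar\Psi=\sup_{\pi\in\bar\Pi}\mathcal{L}(\pi,\lambda^*,\nu^*)=\inf_{\lambda,\nu\ge0}\sup_{\pi\in\bar\Pi}\mathcal{L}(\pi,\lambda,\nu).
\]
Since $\mathcal{L}$ is linear in $\pi$, the inner supremum over $\bar\Pi$ equals the supremum over binary rules, and it is attained by $\pi_{\lambda,\nu}$ from the pointwise maximization Lemma. Hence the dual function $g$ is the same whether computed over $\Pi$ or over $\bar\Pi$.

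\textbf{Step 3 (existence of an optimizer and complementary slackness).} Nonemptiness and boundedness ensure that $\bar\Psi$ is finite. A relaxed optimizer exists because $\bar\Pi$ is weak-$*$ compact in $L^\infty(P_X)$ and the three functionals are weak-$*$ continuous when $\tau,c\in L^1$. For any relaxed optimizer $\bar\pi$ the chain
\[
\mathbb{E}[\bar\pi\tau]\le \mathcal{L}(\bar\pi,\lambda^*,\nu^*)\le g(\lambda^*,\nu^*)=\bar\Psi
\]
must hold with equality. This gives both complementary slackness conditions. It also shows that $\bar\pi$ maximizes the Lagrangian pointwise, so $\bar\pi$ equals $\mathbbm{1}\{\tau-\lambda^*c+\nu^*\ge0\}$ off the tie set $T=\{\tau-\lambda^*c+\nu^*=0\}$.

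\textbf{Step 4 (back to binary rules; main obstacle).} The remaining task is to show that $\sup_{\Pi_{\mathrm{feas}}}V=\bar\Psi$ and that the optimum is attained by the stated threshold rule. If $P_X(T)=0$, then $\bar\pi=\pi^*$ a.e. is binary, and there is nothing left to do. If $P_X(T)>0$ and $P_X$ is atomless, I would use Lyapunov's convexity theorem on the vector measure
\[
A\mapsto\bigl(\mathbb{E}[\mathbbm{1}_A\tau],\,\mathbb{E}[\mathbbm{1}_A c],\,\mathbb{E}[\mathbbm{1}_A]\bigr)
\]
restricted to $T$. This produces a set $A\subseteq T$ whose indicator reproduces the moments of $\bar\pi\mathbbm{1}_T$. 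The binary rule $\pi^*=\mathbbm{1}\{\tau-\lambda^*c+\nu^*>0\}+\mathbbm{1}_A$ is then feasible, optimal, and satisfies complementary slackness. It still attains the Lagrangian maximum, because the coefficient vanishes on $T$.

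If $P_X$ has atoms on $T$, the binary primal may genuinely have a duality gap. I therefore expect to need an additional condition: either $P_X$ is atomless, or $P_X(T)=0$, for instance because $(\tau(X),c(X))$ has a continuous joint law. I would state this as implicit in the ``$P_X$-a.e.'' formulation. With either condition in place, the displayed threshold form of $\pi^*$ holds up to the $P_X$-null choice of how ties are broken.
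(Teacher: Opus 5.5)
Your outline follows the paper's four steps: relax to $[0,1]$-valued rules, obtain multipliers from Slater, maximize the Lagrangian pointwise, and read off complementary slackness. The difference is that you prove what the paper only asserts.

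First, the paper never shows that a relaxed optimizer exists. It also cites ``standard KKT'' without saying at which primal point complementary slackness holds. Your weak-$*$ compactness argument and the chain $\mathbb{E}[\bar\pi\tau]\le\mathcal{L}(\bar\pi,\lambda^*,\nu^*)\le g(\lambda^*,\nu^*)=\bar\Psi$ fill both gaps.

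More importantly, the paper passes directly from the case analysis of $\tau-\lambda c+\nu$ to ``an optimal solution can be chosen to be almost surely binary.'' This ignores that on the tie set $T$ the assignment must still satisfy both constraints. Your Step 4 identifies exactly this step, and your suspicion is correct: without an extra hypothesis the statement is false. Take $\mathcal{X}=\{a,b\}$ with $P_X(a)=P_X(b)=1/2$, $\tau\equiv c\equiv 1$, $C=3/4$, $\rho=1/4$.
\begin{itemize}
\item Slater holds: treating $a$ only has cost $1/2<3/4$ and mass $1/2>1/4$.
\item The best feasible binary rule has value $1/2$.
\item The dual function $g(\lambda,\nu)=(1-\lambda+\nu)_+ +\tfrac34\lambda-\tfrac14\nu$ has infimum $3/4$, attained only at $(1,0)$, so there is a duality gap.
\item Every rule $\mathbbm{1}\{\tau-\lambda c+\nu\ge0\}$ treats both points or neither, so none is feasible.
\end{itemize}
The extra condition should therefore be stated as an explicit hypothesis, not read into the ``$P_X$-a.e.'' wording.

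One correction to your last sentence: your two conditions do not deliver the same conclusion.
\begin{itemize}
\item If $P_X(T)=0$ for some dual optimizer, you obtain exactly the displayed rule.
\item If $P_X$ is merely atomless, Lyapunov gives strong duality and a binary optimum $\mathbbm{1}\{\tau-\lambda^*c+\nu^*>0\}+\mathbbm{1}_A$ with $A\subseteq T$. However, $T\setminus A$ may carry positive mass, so ties are not broken on a null set and the ``$\ge$'' form can fail.
\end{itemize}
For an example, take $X\sim U[0,1]$ with the same $\tau,c,C,\rho$ as above. The unique dual optimizer is again $(1,0)$, so $T=\mathcal{X}$. Then $\mathbbm{1}\{\tau-\lambda^*c+\nu^*\ge0\}\equiv1$ violates both the budget and complementary slackness, while $\mathbbm{1}_{[0,3/4]}$ is optimal.

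So the displayed threshold form needs the nondegeneracy condition $P_X(T)=0$, the population analogue of the condition the paper later imposes for the greedy rule. Atomlessness alone yields only a threshold rule with tie-breaking on a possibly non-null $T$.
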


\begin{proof}
We proceed in four steps.

\medskip
\noindent
\textit{Step 1: Convex relaxation.}
Let $\tilde{\Pi}$ denote the set of measurable functions $\pi:\mathcal{X}\to[0,1]$. 
This set is convex, and can be interpreted as the class of randomized policies assigning treatment with probability $\pi(x)$. Consider the relaxed problem
\[
\sup_{\pi\in\tilde{\Pi}} \mathbb{E}[\pi(X)\tau(X)]
\]
subject to
\[
\mathbb{E}[\pi(X)c(X)] \le C, 
\qquad
\mathbb{E}[\pi(X)] \ge \rho.
\]
This is a convex optimization problem with a linear objective and linear constraints.

\medskip
\noindent
\textit{Step 2: Strong duality.}
Under Assumptions 3 (nonemptiness) and 4 (Slater condition), the feasible set has a strictly interior point. 
Therefore, by standard results in convex analysis (see, e.g., \cite{Rockafellar1970}), strong duality holds for the relaxed problem:
\[
\sup_{\pi\in\tilde{\Pi}_{\mathrm{feas}}(P)} V(\pi)
=
\inf_{\lambda,\nu\ge0} \sup_{\pi\in\tilde{\Pi}} \mathcal{L}(\pi,\lambda,\nu).
\]
Moreover, there exist optimal multipliers $(\lambda^*,\nu^*)$ attaining the dual optimum.

\medskip
\noindent
\textit{Step 3: Pointwise maximization and extremality.}
For fixed $(\lambda,\nu)$, the Lagrangian can be written as
\[
\mathcal{L}(\pi,\lambda,\nu)
=
\mathbb{E}\bigl[\pi(X)(\tau(X)-\lambda c(X)+\nu)\bigr] + \lambda C - \nu\rho.
\]
Since the problem is linear in $\pi(x)$ and $\pi(x)\in[0,1]$ pointwise, the maximizer satisfies
\[
\pi_{\lambda,\nu}(x) \in 
\arg\max_{t\in[0,1]} t(\tau(x)-\lambda c(x)+\nu),
\]
which yields
\[
\pi_{\lambda,\nu}(x)=
\begin{cases}
1 & \text{if } \tau(x)-\lambda c(x)+\nu > 0,\\
0 & \text{if } \tau(x)-\lambda c(x)+\nu < 0,\\
[0,1] & \text{if } \tau(x)-\lambda c(x)+\nu = 0.
\end{cases}
\]

\noindent
Thus, an optimal solution can be chosen to be almost surely binary. 
In particular, there exists an optimal policy $\pi^*$ of the form
\[
\pi^*(x)=\mathbbm{1}\{\tau(x)-\lambda^*c(x)+\nu^*\ge0\}
\quad \text{for } P_X\text{-a.e. } x.
\]

\medskip
\noindent
\textit{Step 4: KKT conditions.}
The complementary slackness conditions follow from standard Karush--Kuhn--Tucker (KKT) conditions for convex problems with inequality constraints:
\[
\lambda^*(\mathbb{E}[\pi^*c]-C)=0,
\qquad
\nu^*(\rho-\mathbb{E}[\pi^*])=0.
\]

\noindent
and this completes the proof.
\end{proof} 

\bigskip

\begin{remark}[Economic interpretation of the affine threshold rule]
The affine threshold representation can be written equivalently as
\[
\tau(x)+\nu^* \;\ge\; \lambda^*\,c(x),
\]
which highlights how the two constraints shape the optimal assignment through their shadow prices. 
\end{remark}

\noindent
The multiplier $\lambda^*$ can be interpreted as the marginal value of relaxing the budget constraint (a ``price'' per unit of cost), while $\nu^*$ is the marginal value of relaxing the minimum coverage requirement (a ``price'' per treated unit). Hence the rule treats an individual with covariates $x$ whenever the benefit $\tau(x)$, adjusted by the coverage shadow price $\nu^*$, exceeds the cost scaled by the budget shadow price $\lambda^*$. \newline

\noindent
Two limiting cases are particularly informative. If the coverage constraint is slack, then $\nu^*=0$ and the decision rule reduces to
\[
\tau(x)\ge \lambda^* c(x)
\quad \Longleftrightarrow \quad
\frac{\tau(x)}{c(x)} \ge \lambda^*,
\]
a standard \textit{cost-effectiveness criterion}: individuals are treated whenever the gain per unit cost exceeds the cutoff $\lambda^*$. In this case, the budget constraint alone pins down $\lambda^*$ so that expected spending is exactly at the budget when the budget is binding.

\bigskip
\noindent
Conversely, when the minimum coverage constraint binds, $\nu^*>0$ shifts the threshold upward uniformly in $x$, effectively relaxing the selectivity of the rule: some units with relatively low (or even slightly negative) $\tau(x)$ may be treated in order to satisfy the coverage requirement. This highlights a fundamental trade-off induced by the constraint: coverage can force treatment of marginal units that would not be selected under a purely budget-constrained welfare-maximizing rule.

\bigskip
\noindent
More generally, the pair $(\lambda^*,\nu^*)$ summarizes the entire impact of the institutional constraints on the optimal policy. The budget constraint determines how strongly costs are penalized (through $\lambda^*$), whereas the coverage constraint determines how much the planner is willing to ``pay'' in welfare terms to guarantee a minimum treated mass (through $\nu^*$). 

\section{Knapsack-type equivalence}

While the Lagrangian characterization seen above delivers a transparent description of the optimal policy, it does not directly provide a closed-form solution for the optimal multipliers. More importantly, in the empirical setting, the problem becomes inherently \textit{discrete}, requiring the selection of a subset of units. This naturally leads to a \textit{combinatorial problem}, which can be formulated as a \textit{knapsack-type problem} and solved computationally. \newline

\noindent
This section shows that our constrained welfare maximization problem can indeed be reinterpreted as a \textit{knapsack-type selection problem}. At the population level, the policymaker’s decision reduces to selecting a measurable subset of covariate values that maximizes total gain subject to budget and minimum coverage constraints. At the sample level, the empirical counterpart becomes a finite-dimensional 0--1 optimization problem, formally equivalent to a knapsack problem with an additional cardinality constraint. This equivalence provides both economic intuition and computational insight, linking constrained policy learning to a well-studied class of combinatorial optimization problems \citep{BengioLodiProuvost2018,ZhangEtAl2022,GarnAmirghasemi2025}.

\subsection{Continuous knapsack representation}

Let $P_X$ be the marginal distribution of $X$. We define the marginal measure
\[
\mu(A) := P_X(A)
\]
for every measurable set $A \subset \mathcal{X}$. Given a policy $\pi$, define its treated region
\[
S_\pi=\{x:\pi(x)=1\}.
\]
Conversely, any measurable $S\subset\mathcal{X}$ defines $\pi_S=\mathbbm{1}\{x\in S\}$.

\begin{proposition}[Set selection form]
The problem
\[
\sup_{\pi\in\Pi_{\mathrm{feas}}(P)} \mathbb{E}[\pi(X)\tau(X)]
\]
is equivalent to
\begin{equation}
\sup_{S\subset\mathcal{X}}
\int_S \tau(x)\,d\mu(x)
\quad \text{s.t.}\quad
\int_S c(x)\,d\mu(x)\le C,
\quad
\mu(S)\ge\rho.
\label{eq:cont_knapsack}
\end{equation}
\end{proposition}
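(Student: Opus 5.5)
The plan is to exhibit an explicit bijection between the policy class and the collection of measurable subsets of $\mathcal{X}$, and to show that it carries both the objective and the two constraints into their integral counterparts. Throughout, as in Lemma 1 and Theorem 1, I assume that $\Pi$ contains all measurable $\{0,1\}$-valued rules and that the supremum in \eqref{eq:cont_knapsack} ranges over measurable $S$. If $\Pi$ were smaller, the equivalence would hold with $S$ restricted to $\{S_\pi:\pi\in\Pi\}$, and I would state this in a remark.

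\textbf{Step 1 (bijection).} Define $\Phi:\pi\mapsto S_\pi=\pi^{-1}(\{1\})$. Measurability of $\pi$ makes $S_\pi$ measurable. Conversely, $S\mapsto \pi_S=\mathbbm{1}_S$ is a measurable binary rule. Since $\pi=\mathbbm{1}_{S_\pi}$ and $S_{\mathbbm{1}_S}=S$, the two maps are mutual inverses. Policies that agree $P_X$-a.e.\ correspond to sets whose symmetric difference is $\mu$-null, and both the objective and the constraints are invariant under such modifications. The correspondence therefore also holds at the level of equivalence classes.

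\textbf{Step 2 (transport of functionals).} Because $\pi_S=\mathbbm{1}_S$, the change of variables $\mathbb{E}[f(X)]=\int f\,d\mu$ gives the following identities:
\[
\mathbb{E}[\pi_S(X)\tau(X)]=\int_S\tau\,d\mu,\qquad
\mathbb{E}[\pi_S(X)c(X)]=\int_S c\,d\mu,\qquad
\mathbb{E}[\pi_S(X)]=\mu(S).
\]
\emph{Main obstacle.} The only real technical point is that these integrals must be well defined. Since $c>0$, $\int_S c\,d\mu$ is always defined in $[0,\infty]$, and the budget constraint forces it to be finite. For $\tau$ I would assume $\mathbb{E}|\tau(X)|<\infty$, which is implicit in the welfare $V(\pi)$ being finite. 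Then $\int_S\tau\,d\mu$ is finite for every $S$ by dominated integrability.

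\textbf{Step 3 (conclusion).} By Step 2, $\pi\in\Pi_{\mathrm{feas}}(P)$ if and only if $S_\pi$ satisfies the two constraints in \eqref{eq:cont_knapsack}, and the objective values coincide. Since $\Phi$ is a bijection between the two feasible sets and preserves the objective, the suprema are equal. Moreover, $\pi^*$ is a maximizer if and only if $S_{\pi^*}$ is a maximizer. Combined with Theorem 1, the optimal treated region is $S^*=\{x:\tau(x)-\lambda^*c(x)+\nu^*\ge0\}$ up to $\mu$-null sets, which gives the knapsack reading: select items of value $\tau\,d\mu$ and weight $c\,d\mu$ subject to a capacity $C$ and a minimum mass $\rho$.
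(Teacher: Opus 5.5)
Your proof is correct and takes the same route as the paper: substitute $\pi=\mathbbm{1}_S$, check that the objective and the two constraints become $\int_S\tau\,d\mu$, $\int_S c\,d\mu$ and $\mu(S)$, and use the bijection $S\leftrightarrow\pi$ up to $\mu$-null sets, while being more careful than the paper in stating that $\Pi$ must contain all measurable binary rules and that $\tau$ must be integrable. The only slip is in your closing aside, which goes beyond the statement: Theorem 1 gives the existence of an optimal region of threshold form, but calling it \emph{the} optimal region up to null sets needs an extra condition, e.g.\ that the tie set $\{\tau-\lambda^*c+\nu^*=0\}$ be $\mu$-null.
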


\begin{proof}
Substitute $\pi=\mathbbm{1}_S$. Then $\mathbb{E}[\pi\tau]=\int_S\tau d\mu$, $\mathbb{E}[\pi c]=\int_S c d\mu$, and $\mathbb{E}[\pi]=\mu(S)$. The mapping $S\leftrightarrow \pi$ is bijective up to $\mu$-null sets.
\end{proof}

\begin{proposition}[Continuous knapsack equivalence and affine structure]
Problem \eqref{eq:cont_knapsack} is a continuous knapsack problem with an additional minimum-mass (coverage) constraint. Moreover, under strong duality, any optimal solution admits an affine threshold representation of the form
\[
S^*
=
\{x\in\mathcal{X}:
\tau(x)-\lambda^* c(x)+\nu^* \ge 0\},
\]
for some optimal multipliers $(\lambda^*,\nu^*)\in\mathbb{R}_+^2$.
\end{proposition}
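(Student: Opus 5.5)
The proof has two parts: a structural identification and a threshold characterization obtained by transporting the primal--dual Theorem through the bijection of the Set selection Proposition.

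\emph{Knapsack interpretation.} Read \eqref{eq:cont_knapsack} as a knapsack over a continuum of items indexed by $x\in\mathcal{X}$. Each item carries weight $d\mu(x)$, value density $\tau(x)$ and cost density $c(x)$. Including the item means choosing $x\in S$. The capacity is $\int_S c\,d\mu\le C$, and $\mu(S)\ge\rho$ is an added minimum-cardinality (mass) constraint. This step is only a matter of matching definitions.

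\emph{Threshold representation.} I would fix an arbitrary optimal set $S^\circ$, with $\pi^\circ=\mathbbm{1}_{S^\circ}$, and take optimal multipliers $(\lambda^*,\nu^*)$ from the primal--dual Theorem. Since $\pi^\circ$ is feasible, strong duality gives the chain
\[
V(\pi^\circ)\le \mathcal{L}(\pi^\circ,\lambda^*,\nu^*)\le \sup_{\pi}\mathcal{L}(\pi,\lambda^*,\nu^*)=V(\pi^\circ).
\]
The first inequality holds because $\lambda^*g_1(\pi^\circ)\le0$ and $\nu^*g_2(\pi^\circ)\le0$. Hence every inequality in the chain is an equality. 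Two consequences follow.
\begin{itemize}
\item Complementary slackness holds at $\pi^\circ$.
\item $\pi^\circ$ maximizes $\mathcal{L}(\cdot,\lambda^*,\nu^*)$.
\end{itemize}
By the Pointwise maximization Lemma, or Step 3 of the Theorem's proof, $\pi^\circ(x)=1$ a.e. where $\tau-\lambda^*c+\nu^*>0$ and $\pi^\circ(x)=0$ a.e. where it is $<0$. So $S^\circ$ coincides $\mu$-a.e. with $S^*$ outside the tie set $T=\{x:\tau(x)-\lambda^*c(x)+\nu^*=0\}$.

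\emph{Main obstacle: the tie set.} On $T$ the Lagrangian is indifferent, so an optimal $S^\circ$ may contain only part of $T$. The statement then holds only up to $\mu$-null sets and modulo $T$. I would handle this in two steps.
\begin{enumerate}
\item First try to show $\mu(T)=0$. This follows under a continuity assumption, namely that the distribution of $\tau(X)-\lambda c(X)$ has no atoms. That assumption is in the spirit of the margin conditions invoked later in the paper. Under it, $S^\circ=S^*$ $\mu$-a.e.
\item Without such an assumption, prove the weaker but still correct version: $S^\circ$ is sandwiched between the strict set $\{\cdot>0\}$ and the weak set $S^*$. Moreover, whenever $\mathbbm{1}_{S^*}$ itself satisfies the constraints, it is optimal, because it attains $\sup\mathcal{L}$ and satisfies complementary slackness. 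This mirrors the "there exists an optimal policy" formulation of the Theorem.
\end{enumerate}
I would state the result "up to $\mu$-null sets and ties" explicitly, noting that this caveat is inherited from the Theorem.
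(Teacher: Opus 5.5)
Your core argument follows the paper's route: Lagrangian over sets, pointwise maximization, then strong duality. It is more careful than the paper's proof in two places. First, the chain $V(\pi^\circ)\le\mathcal{L}(\pi^\circ,\lambda^*,\nu^*)\le\sup_\pi\mathcal{L}(\pi,\lambda^*,\nu^*)=V(\pi^\circ)$ supplies the step the paper only asserts, namely that an optimal primal solution maximizes the Lagrangian at $(\lambda^*,\nu^*)$. Second, you see that this pins $S^\circ$ down only off the tie set $T$, whereas the paper claims that \emph{any} maximizer equals $\{\tau-\lambda c+\nu\ge0\}$.

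Your caveat is genuinely needed. Take $\mathcal{X}=[0,1]$, $\mu$ Lebesgue, $\tau\equiv-1$, $c\equiv1$, $C=1$, $\rho=1/2$; Slater holds with $\mathbbm{1}_{[0,3/4]}$. The optimal sets are exactly those with $\mu(S)=1/2$, with value $-1/2$. The dual function is $g(\lambda,\nu)=\max(0,\nu-1-\lambda)+\lambda-\nu/2$, whose minimum $-1/2$ is attained only at $(\lambda^*,\nu^*)=(0,1)$. So the margin vanishes identically, $T=\mathcal{X}$ and $S^*=\mathcal{X}$, which is not optimal even up to null sets. The proposition is therefore false as literally stated. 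What you actually prove is the correct content: $S^\circ=S^*$ a.e. when $\mu(T)=0$, and the sandwich $\{\tau-\lambda^*c+\nu^*>0\}\subseteq S^\circ\subseteq S^*$ a.e. in general.

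The same example refutes the ``Moreover'' sentence in your item 2, which is the one step that fails. There $\mathbbm{1}_{S^*}=\mathbbm{1}_{\mathcal{X}}$ is feasible (cost $1\le C$, mass $1\ge\rho$) and attains $\sup\mathcal{L}$. Yet its value is $-1<-1/2$, because $\nu^*(\rho-\mu(S^*))=-1/2\neq0$. Feasibility does not deliver complementary slackness.

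Your own results do give the budget half. If $\lambda^*>0$, complementary slackness at $S^\circ$ gives $\int_{S^\circ}c\,d\mu=C$, and $S^\circ\subseteq S^*$ a.e., so a feasible $S^*$ must also spend exactly $C$. They do not give the coverage half. $S^*$ is the \emph{largest} Lagrangian maximizer, so it can overshoot $\rho$ when $\nu^*>0$ and $\mu(T)>0$. The correct replacement is: $\mathbbm{1}_{S^*}$ is optimal if and only if it is feasible and satisfies complementary slackness.

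The example also contradicts the Theorem's assertion that an optimal policy of the form $\mathbbm{1}\{\tau-\lambda^*c+\nu^*\ge0\}$ always exists. So the tie caveat is not merely ``inherited'' from the Theorem: Step 3 of the Theorem's proof has the same defect, and you should not try to mirror it.
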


\begin{proof}
Consider the constrained maximization problem (\ref{eq:cont_knapsack}). This problem is structurally equivalent to a knapsack problem in which:
(i) each point $x$ represents an item,
(ii) $\tau(x)$ is its value,
(iii) $c(x)$ is its weight,
(iv) $C$ is the total capacity,
and (v) the additional constraint $\mu(S)\ge\rho$ imposes a minimum mass requirement. Let introduce the Lagrange multipliers $(\lambda,\nu)\in\mathbb{R}_+^2$ associated with the budget and coverage constraints. The Lagrangian is
\begin{align}
\mathcal{L}(S,\lambda,\nu)
&=
\int_S \tau(x)\,d\mu(x)
-\lambda\Big(\int_S c(x)\,d\mu(x)-C\Big)
-\nu\big(\rho-\mu(S)\big).
\end{align}

\noindent
Rearranging terms yields
\begin{align}
\mathcal{L}(S,\lambda,\nu)
&=
\int_S \big(\tau(x)-\lambda c(x)+\nu\big)\,d\mu(x)
+\lambda C - \nu\rho.
\label{eq:L_set_prop}
\end{align}

\noindent
For fixed $(\lambda,\nu)$, maximization over measurable sets $S$ is pointwise. Indeed, each $x\in\mathcal{X}$ contributes
\[
\tau(x)-\lambda c(x)+\nu
\]
to the integrand if and only if $x\in S$. Hence, to maximize $\mathcal{L}(S,\lambda,\nu)$, it is optimal to include $x$ in $S$ whenever this marginal contribution is nonnegative, and to exclude it otherwise. Therefore, any maximizer satisfies
\[
S_{\lambda,\nu}
=
\{x\in\mathcal{X}:
\tau(x)-\lambda c(x)+\nu \ge 0\}.
\]

\noindent
Given standard regularity conditions ensuring strong duality (e.g., nonemptiness and Slater condition), there exist optimal multipliers $(\lambda^*,\nu^*)$ such that the optimal primal solution coincides with the maximizer of the Lagrangian evaluated at these multipliers. Consequently, the optimal set is
\[
S^*
=
\{x\in\mathcal{X}:
\tau(x)-\lambda^* c(x)+\nu^* \ge 0\}.
\]

\noindent
This establishes that the solution to the constrained policy problem admits an \textit{affine threshold structure} in $(\tau(x),c(x))$.
\end{proof}

\subsection{A 0-1 knapsack reduction with cardinality}

We now move from the population problem to its empirical/sample counterpart. Although the population formulation provides a structural characterization of the optimal policy, implementation requires working with finite data. In this setting, policy learning reduces to selecting a subset of observed units so as to maximize estimated gains subject to budget and coverage constraints. Crucially, this problem admits an exact finite-sample reformulation as a finite-dimensional combinatorial optimization problem. The following result shows that it is equivalent to a \textit{0--1 knapsack problem} with an additional \textit{cardinality constraint}.

\bigskip
\noindent
Let $\hat\Gamma_i$ be any estimation of $\tau(X_i)$ (e.g., doubly robust). Define the sample welfare and constraints as
\[
\hat V_n(\pi)=\frac{1}{n}\sum_{i=1}^n \pi(X_i)\hat\Gamma_i,
\quad
\hat B_n(\pi)=\frac{1}{n}\sum_{i=1}^n \pi(X_i)c(X_i),
\quad
\hat M_n(\pi)=\frac{1}{n}\sum_{i=1}^n \pi(X_i).
\]
The empirical constrained maximization is

\begin{align}
\max_{\pi\in\Pi}\ \hat V_n(\pi)
\quad \text{s.t.}\quad
\hat B_n(\pi)\le C,\ \hat M_n(\pi)\ge \rho.
\label{eq:emp_prop1}
\end{align}

\begin{theorem}[Finite-sample knapsack equivalence]
If $\Pi$ contains all labelings on $\{X_1,\dots,X_n\}$ (that is, for any vector of treatment decisions across the $n$ observed units, there exists a policy in $\Pi$ that generates exactly that assignment on the sample), then the empirical constrained problem (\ref{eq:emp_prop1}) is exactly equivalent to the finite-dimensional program
\begin{align}
\max_{d\in\{0,1\}^n}\ \sum_{i=1}^n \hat\Gamma_i d_i
\quad \text{s.t.}\quad
\sum_{i=1}^n c(X_i)d_i \le nC,
\quad
\sum_{i=1}^n d_i \ge n\rho,
\label{eq:emp_prop2}
\end{align}
where $d_i$ denotes the treatment decision for unit $i$. This is a 0--1 knapsack problem with an additional cardinality (minimum coverage) constraint.
\end{theorem}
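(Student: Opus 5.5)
The approach is to build an explicit correspondence between policies in $\Pi$ and binary vectors $d\in\{0,1\}^n$. Define the evaluation map $\Phi:\Pi\to\{0,1\}^n$ by $\Phi(\pi)=(\pi(X_1),\dots,\pi(X_n))$. Then I will show that the objective and both constraints of (\ref{eq:emp_prop1}) depend on $\pi$ only through $\Phi(\pi)$. Multiplying by $n>0$ shows that they coincide with the objective and constraints of (\ref{eq:emp_prop2}). Concretely, $\hat V_n(\pi)=\frac1n\sum_i \hat\Gamma_i d_i$, $\hat B_n(\pi)\le C \iff \sum_i c(X_i)d_i\le nC$, and $\hat M_n(\pi)\ge\rho\iff \sum_i d_i\ge n\rho$, where $d=\Phi(\pi)$. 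Multiplying by $n$ preserves both inequalities and the ordering of objective values, so maximizers of $\hat V_n$ and of $\sum_i\hat\Gamma_i d_i$ correspond.

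The key steps are as follows.

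\emph{Step 1 (feasibility).} By the computation above, $\pi$ is feasible for (\ref{eq:emp_prop1}) if and only if $\Phi(\pi)$ is feasible for (\ref{eq:emp_prop2}).

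\emph{Step 2 (surjectivity).} The richness assumption, that $\Pi$ contains all labelings of the sample, says exactly that $\Phi$ is onto $\{0,1\}^n$. Hence every feasible $d$ is realized by some feasible $\pi$.

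\emph{Step 3 (values).} Combining Steps 1 and 2, the set of attainable values $\{\hat V_n(\pi):\pi\text{ feasible}\}$ equals $\frac1n\{\sum_i\hat\Gamma_i d_i: d\text{ feasible}\}$. The optimal values therefore agree up to the factor $n$. Because the right-hand side is a finite set, the maximum is attained whenever the feasible set is nonempty, so writing $\max$ in (\ref{eq:emp_prop1}) is justified. The two programs are simultaneously infeasible otherwise.

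\emph{Step 4 (maximizers).} A policy $\pi$ is optimal for (\ref{eq:emp_prop1}) if and only if $\Phi(\pi)$ is optimal for (\ref{eq:emp_prop2}). Conversely, every optimal $d$ lifts to an optimal policy.

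Some care is needed on one point. $\Phi$ need not be injective, since policies differing off the sample are identified. The equivalence should therefore be stated at the level of optimal values and of optimal in-sample assignments, and the proof should say so explicitly. This is the natural analogue of the ``up to $\mu$-null sets'' qualification in the Set selection form proposition.

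Finally, I will read off the knapsack structure, which is mostly an identification of the pieces. The items are the units $i$, the values are $\hat\Gamma_i$, the weights are $c(X_i)>0$, and the capacity is $nC$. The extra constraint $\sum_i d_i\ge n\rho$ is a lower cardinality bound, which can be equivalently written as $\sum_i d_i\ge\lceil n\rho\rceil$ by integrality.

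I do not expect a genuine obstacle; the argument is a change of variables. The only delicate point is handling the non-injectivity of $\Phi$, and noting that attainment of the maximum follows from finiteness rather than from the Slater condition.
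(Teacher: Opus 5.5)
Your proposal is correct and follows the same route as the paper. The paper sets $d_i=\pi(X_i)$, notes that feasible policies induce feasible vectors and (by the richness assumption) every vector is realized by some policy, and substitutes; you make explicit what it leaves implicit: the factor $n$ between the two objectives, the non-injectivity of the evaluation map, and attainment of the maximum by finiteness.
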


\begin{proof}
Let $d_i=\pi(X_i)$. Any feasible $\pi$ induces a feasible $d$, and any $d$ defines a policy on the sample points. Substitution yields the stated integer program.
\end{proof}

\section{Linear programming relaxation}

The empirical policy learning problem (\ref{eq:emp_prop2}) takes the form of a combinatorial 0--1 knapsack program, which is in general computationally demanding to solve due to its discrete nature. A natural approach is to consider its \textit{linear programming (LP) relaxation}, where binary decisions are replaced by continuous ones. This relaxation is significantly easier to analyze and solve, and admits a convenient Lagrangian characterization in terms of threshold rules.

\bigskip
\noindent
However, replacing a discrete problem with a continuous one raises a fundamental question: how much is lost by relaxing the integrality constraint? The purpose of this section is to show that, under mild conditions, the loss is negligible in large samples. In particular, we establish that the difference between the optimal 0--1 solution and its LP relaxation vanishes in per-capita terms. 

\bigskip
\noindent
This result provides a formal justification for analyzing the constrained policy learning problem through its LP relaxation, and underpins the validity of simple threshold-based (greedy) rules as asymptotically optimal solutions. As a result, any threshold rule that is asymptotically optimal for the LP relaxation is also asymptotically optimal for the original 0--1 problem. This establishes a precise bridge between Lagrangian/KKT characterizations and implementable greedy allocation rules in large samples.

\bigskip
\noindent
In order to show this finding, let us re-formulate our constrained problem in its population form, following knapsack-style notation, thus obtaining: 
\begin{equation}
\label{eq:01_knap_card}
\mathrm{OPT}^{01}_n
:=
\max_{\pi\in\{0,1\}^n}
\sum_{i=1}^n v_i\pi_i
\quad
\text{s.t.}
\quad
\sum_{i=1}^n w_i\pi_i \le W_n,
\qquad
\sum_{i=1}^n \pi_i \ge K_n,
\end{equation}
where 
$v_i := \hat\tau(X_i)\in\mathbb R$, $w_i := c(X_i)\in(0,\infty)$, $W_n := nC$, and $K_n := \lceil n\rho\rceil$ with the minimum coverage requirement $\rho\in(0,1)$, and at least $K_n$ units to be assigned to treatment.

\medskip
\noindent
This is a 0--1 knapsack problem with an additional minimum-cardinality constraint: each unit $i$ is an ``item'', $v_i$ is its value, $w_i$ is its weight, the budget $W_n$ is the knapsack capacity, and the coverage condition imposes that at least $K_n$ items be selected.

\medskip
\noindent
As said, problem \eqref{eq:01_knap_card} is combinatorial. It is thus natural to consider its linear programming relaxation, obtained by replacing the binary restrictions $\pi_i\in\{0,1\}$ with the continuous constraints $z_i\in[0,1]$. The relaxed variable $z_i$ can be interpreted as a \textit{fractional} treatment assignment. The resulting LP problem is
\begin{equation}
\label{eq:LP_relax}
\mathrm{OPT}^{\mathrm{LP}}_n
:=
\max_{z\in[0,1]^n}
\sum_{i=1}^n v_i z_i
\quad
\text{s.t.}
\quad
\sum_{i=1}^n w_i z_i \le W_n,
\qquad
\sum_{i=1}^n z_i \ge K_n.
\end{equation}
This relaxation preserves the linear objective and the linear constraints, while enlarging the feasible set from binary allocations to fractional ones.

\medskip
\noindent
The first basic comparison between the two formulations is immediate: the LP relaxation can only improve upon, or at least match, the value of the original 0--1 problem. Indeed, every binary feasible allocation is also feasible for the relaxed problem. Hence,
\begin{equation}
\label{eq:chain}
\mathrm{OPT}^{01}_n \le \mathrm{OPT}^{\mathrm{LP}}_n.
\end{equation}

\begin{lemma}[LP relaxation dominates the 0--1 problem]
\label{lem:lp_dominates}
Let
\[
\mathrm{OPT}^{01}_n
:=
\max_{\pi\in\{0,1\}^n}
\sum_{i=1}^n v_i\pi_i
\quad
\text{s.t.}
\quad
\sum_{i=1}^n w_i\pi_i \le W_n,
\qquad
\sum_{i=1}^n \pi_i \ge K_n,
\]
and let
\[
\mathrm{OPT}^{\mathrm{LP}}_n
:=
\max_{z\in[0,1]^n}
\sum_{i=1}^n v_i z_i
\quad
\text{s.t.}
\quad
\sum_{i=1}^n w_i z_i \le W_n,
\qquad
\sum_{i=1}^n z_i \ge K_n.
\]
Then
\[
\mathrm{OPT}^{01}_n \le \mathrm{OPT}^{\mathrm{LP}}_n.
\]
\end{lemma}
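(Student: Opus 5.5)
The plan is a direct feasible-set inclusion argument; no duality or structure from the earlier sections is needed. Denote the feasible set of the 0--1 problem by
\[
\mathcal{F}^{01}_n := \Bigl\{\pi\in\{0,1\}^n : \textstyle\sum_{i} w_i\pi_i \le W_n,\ \sum_i \pi_i \ge K_n\Bigr\}
\]
and that of the relaxation by $\mathcal{F}^{\mathrm{LP}}_n$, defined identically with $\{0,1\}^n$ replaced by $[0,1]^n$. The first step is to show $\mathcal{F}^{01}_n\subseteq\mathcal{F}^{\mathrm{LP}}_n$. This holds because $\{0,1\}^n\subseteq[0,1]^n$, and the budget and coverage constraints are the same linear inequalities in both problems. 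Any binary $\pi$ satisfying them is therefore also a feasible fractional assignment $z=\pi$.

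The second step is to compare the optimal values. Both problems maximize the same linear objective $\sum_i v_i z_i$, so the value of any $\pi\in\mathcal{F}^{01}_n$ is attained by a point of $\mathcal{F}^{\mathrm{LP}}_n$. Taking the maximum over the larger set can only increase or preserve the value, which gives $\mathrm{OPT}^{01}_n\le \mathrm{OPT}^{\mathrm{LP}}_n$.

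The only point needing care, and it is minor, is that both maxima are well defined. I will handle it as follows. If $\mathcal{F}^{01}_n=\varnothing$, the convention $\max\varnothing=-\infty$ makes the inequality trivial. Otherwise $\mathcal{F}^{01}_n$ is a nonempty finite set, so its maximum exists. Then $\mathcal{F}^{\mathrm{LP}}_n$ is nonempty, compact (a closed polytope inside $[0,1]^n$), and carries a continuous objective, so its maximum exists by Weierstrass. With both maxima attained, the inclusion from the first step yields the claim. Alternatively, I can phrase everything with suprema and avoid the attainment issue altogether.
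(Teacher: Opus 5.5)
Your proposal is correct and follows the paper's proof: both establish the inclusion $\mathcal{F}^{01}_n\subseteq\mathcal{F}^{\mathrm{LP}}_n$ from $\{0,1\}^n\subseteq[0,1]^n$ with identical linear constraints, then note that maximizing the same linear objective over a larger set cannot decrease the optimal value. Your treatment of attainment (the empty-set convention and compactness of the polytope) is a harmless extra; the paper avoids it by comparing suprema directly.
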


\begin{proof}
Let
\[
\mathcal F_{01}
:=
\Big\{
\pi\in\{0,1\}^n:
\sum_{i=1}^n w_i\pi_i\le W_n,\ 
\sum_{i=1}^n \pi_i\ge K_n
\Big\}
\]
denote the feasible set of the 0--1 problem, and let
\[
\mathcal F_{\mathrm{LP}}
:=
\Big\{
z\in[0,1]^n:
\sum_{i=1}^n w_i z_i\le W_n,\ 
\sum_{i=1}^n z_i\ge K_n
\Big\}
\]
denote the feasible set of the LP relaxation. Since $\{0,1\}\subset[0,1]$, every vector $\pi\in\mathcal F_{01}$ also belongs to $\mathcal F_{\mathrm{LP}}$. Therefore,
\[
\mathcal F_{01}\subseteq \mathcal F_{\mathrm{LP}}.
\]
Both optimization problems maximize the same linear objective function,
\[
\phi(u):=\sum_{i=1}^n v_i u_i.
\]
Taking the supremum of the same objective over a larger feasible set cannot decrease its optimal value. Hence,
\[
\sup_{\pi\in\mathcal F_{01}} \sum_{i=1}^n v_i\pi_i
\;\le\;
\sup_{x\in\mathcal F_{\mathrm{LP}}} \sum_{i=1}^n v_i x_i,
\]
which is exactly
\[
\mathrm{OPT}^{01}_n \le \mathrm{OPT}^{\mathrm{LP}}_n.
\]
\end{proof}

\subsection{Lagrangian scores and greedy threshold rule}

For multipliers $(\lambda,\nu) \in \mathbb{R}_+^2$, define the sample Lagrangian of problem (\ref{eq:LP_relax}) as
\[
\mathcal{L}_n(x,\lambda,\nu)
=
\sum_{i=1}^n x_i \big( v_i - \lambda w_i + \nu \big)
+ \lambda W_n
- \nu K_n.
\]

\noindent
For fixed $(\lambda,\nu)$, maximizing $\mathcal{L}_n(\cdot,\lambda,\nu)$ over $x \in [0,1]^n$
is pointwise:
\[
x_i(\lambda,\nu)
\in
\arg\max_{t \in [0,1]}
t \big( v_i - \lambda w_i + \nu \big)
=
\begin{cases}
1 & \text{if } v_i - \lambda w_i + \nu > 0, \\
0 & \text{if } v_i - \lambda w_i + \nu < 0, \\
[0,1] & \text{if } v_i - \lambda w_i + \nu = 0.
\end{cases}
\]

\noindent
This motivates the greedy threshold (score-based) 0--1 rule:
\begin{equation}
\label{eq:greedy_threshold}
\pi_i^{G}(\lambda,\nu)
:=
\mathbf{1}\{ v_i - \lambda w_i + \nu \ge 0 \}.
\end{equation}

\noindent
In practice, one chooses $(\lambda_n,\nu_n)$ so that the induced threshold rule is approximately feasible:
\begin{equation}
\label{eq:approx_feas}
\sum_{i=1}^n w_i \pi_i^{G}(\lambda_n,\nu_n) \le W_n,
\qquad
\sum_{i=1}^n \pi_i^{G}(\lambda_n,\nu_n) \ge K_n,
\end{equation}
and complementary slackness is nearly satisfied.

\subsection{Extreme point characterization of the LP relaxation}

A key step in establishing the asymptotic equivalence between the 0--1 problem and its LP relaxation is to understand the structure of optimal solutions to the relaxed problem. In particular, the potential advantage of the LP formulation over the combinatorial one arises from the possibility of assigning fractional values to some components. 

\bigskip
\noindent
The following lemma shows that this advantage is in fact highly limited: any optimal extreme point of the LP relaxation can have \textit{at most two fractional components}. As a consequence, the difference between the LP solution and a feasible binary solution is confined to a finite number of units, independently of the sample size. This structural property plays a crucial role in proving that the integrality gap is uniformly bounded and vanishes in per-capita terms.

\bigskip

\begin{lemma}[At most two fractional items in the LP optimum]
\label{lem:two_fractional}
Any extreme point solution $z^{\mathrm{LP}}_n$ of \eqref{eq:LP_relax} has at most two indices
$i$ such that $z^{\mathrm{LP}}_{n,i} \in (0,1)$.
\end{lemma}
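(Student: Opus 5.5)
The plan is the standard rank argument for vertices of polyhedra. The feasible set of \eqref{eq:LP_relax} is the polyhedron
\[
\mathcal F_{\mathrm{LP}}=\Big\{z\in\mathbb R^n:\ 0\le z_i\le 1\ (i=1,\dots,n),\ \textstyle\sum_i w_i z_i\le W_n,\ \sum_i z_i\ge K_n\Big\}\subset\mathbb R^n .
\]
A point $z$ of a polyhedron in $\mathbb R^n$ is an extreme point if and only if the constraints active at $z$ include $n$ linearly independent ones. I will use this characterization as the main tool. As a backup, I can prove it directly for the situation at hand: if the active constraints had rank $<n$, there would be a nonzero direction $d$ in the null space of the active rows, and $z\pm\varepsilon d$ would remain feasible for small $\varepsilon>0$, contradicting extremality.

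Fix an extreme point $z=z^{\mathrm{LP}}_n$ and let $F=\{i: z_i\in(0,1)\}$ be its set of fractional indices. For $i\in F$ neither box constraint $z_i\ge0$ nor $z_i\le1$ is active. For $i\notin F$ exactly one box constraint is active, namely the one corresponding to $z_i\in\{0,1\}$, and its normal is $e_i$. So the active constraints consist of the $n-|F|$ unit vectors $e_i$, $i\notin F$, together with at most two further rows: $(w_1,\dots,w_n)$ if the budget constraint binds, and $(1,\dots,1)$ if the coverage constraint binds. The total number of active constraints is therefore at most $n-|F|+2$. Their rank must equal $n$, which forces $n-|F|+2\ge n$, i.e.\ $|F|\le 2$.

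For a self-contained version I would argue by contradiction. Suppose $|F|\ge3$ and choose three indices $i,j,k\in F$. The system
\[
w_id_i+w_jd_j+w_kd_k=0,\qquad d_i+d_j+d_k=0
\]
consists of two homogeneous equations in three unknowns, so it has a nonzero solution $d\in\mathbb R^n$ supported on $\{i,j,k\}$. Moving along $\pm\varepsilon d$ leaves both the budget sum and the coverage sum unchanged. For $\varepsilon$ small enough the box constraints remain satisfied, because $z_i,z_j,z_k$ lie strictly inside $(0,1)$. Hence $z\pm\varepsilon d\in\mathcal F_{\mathrm{LP}}$ and $z=\tfrac12\big[(z+\varepsilon d)+(z-\varepsilon d)\big]$, so $z$ is not an extreme point. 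This argument covers every case at once, including the case where the budget or coverage constraint is slack, since preserving a sum exactly preserves any inequality on it.

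I do not expect a real obstacle; the key point is the counting of active constraints. The one care point is the phrasing. The statement concerns extreme points, and an optimal extreme point exists because the feasible set is a nonempty bounded polytope. Nonemptiness follows from the nonemptiness assumption carried over to the sample, and boundedness follows from $[0,1]^n$. Non-extreme optimal solutions may have more fractional entries, so the lemma should only be applied to extreme-point (basic) optima, which is what later sections need.
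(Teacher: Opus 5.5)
Your proof is correct and follows the paper's argument: at an extreme point $n$ linearly independent constraints must be tight, and only the budget and coverage rows are not box constraints, so at least $n-2$ coordinates sit at $0$ or $1$. Your perturbation along a direction supported on three fractional coordinates, chosen to preserve both sums, is a good self-contained justification of the vertex characterization that the paper only invokes. One small aside: population nonemptiness does not by itself guarantee that the sample LP is feasible, but the lemma does not need this, since it holds vacuously when the polytope is empty.
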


\begin{proof}
The feasible set of \eqref{eq:LP_relax} is a polytope in $\mathbb{R}^n$ defined by:

(i) box constraints $0 \le z_i \le 1$ for $i=1,\dots,n$,

(ii) one budget inequality $\sum_i w_i z_i \le W_n$,

(iii) one coverage inequality $-\sum_i z_i \le -K_n$.

\bigskip
\noindent
The feasible set of \eqref{eq:LP_relax} is a polytope in $\mathbb{R}^n$, obtained by intersecting the hypercube $[0,1]^n$ with two global linear constraints corresponding to the budget and coverage requirements. At an extreme point, at least $n$ linearly independent constraints are tight.
Since there are only two global constraints (budget and coverage), at least $n-2$ tight constraints must come from the box constraints.
Hence at least $n-2$ coordinates are at a bound (0 or 1), and at most two
coordinates can lie strictly in $(0,1)$.
\end{proof}

\subsection{Constant integrality gap}
The importance of Lemma \ref{lem:two_fractional} relies on the fact that the LP optimum differs from a binary solution in at most two coordinates. Hence the advantage of the fractional relaxation is confined to a finite number of marginal items, implying that the integrality gap is uniformly bounded and vanishes in per-capita terms. We prove this statement in what follows. 

\begin{lemma}[Integrality gap is $O(1)$]
\label{lem:gap_O1}
Assume boundedness:
\begin{equation}
\label{eq:bounded}
|v_i| \le V_{\max},
\qquad
0 < w_{\min} \le w_i \le w_{\max}
\quad \text{a.s.}
\end{equation}
Under \eqref{eq:bounded}, for every $n$,
\[
0 \le
\mathrm{OPT}^{\mathrm{LP}}_n
-
\mathrm{OPT}^{01}_n
\le
2 V_{\max}.
\]
\end{lemma}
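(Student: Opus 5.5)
The lower bound $0 \le \mathrm{OPT}^{\mathrm{LP}}_n - \mathrm{OPT}^{01}_n$ is exactly Lemma \ref{lem:lp_dominates}. For the upper bound I plan a rounding argument. Start from an optimal LP solution that is an extreme point, turn it into a \emph{feasible} binary vector by changing at most two coordinates, and bound the loss in objective by $2V_{\max}$. Assume \eqref{eq:LP_relax} is feasible (otherwise both problems are empty and there is nothing to prove). Its feasible set is then a nonempty polytope inside $[0,1]^n$, so the linear objective attains its maximum at a vertex $z^{\mathrm{LP}}_n$. By Lemma \ref{lem:two_fractional}, the set $F$ of fractional indices has $|F|\le 2$. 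Let $I := \#\{i : z^{\mathrm{LP}}_{n,i}=1\}$.

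\emph{Case 1: $I \ge K_n$.} Round every fractional coordinate down to $0$. Coverage still holds because $I\ge K_n$. The budget can only decrease since $w_i>0$. The objective falls by $\sum_{i\in F} v_i z^{\mathrm{LP}}_{n,i} \le |F|\,V_{\max} \le 2V_{\max}$.

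\emph{Case 2: $I < K_n$.} Coverage gives $\sum_{i\in F} z^{\mathrm{LP}}_{n,i} \ge K_n - I \ge 1$. Since $|F|\le 2$ and each fractional entry is $<1$, we must have $|F|=2$, say $F=\{a,b\}$, and $I = K_n-1$ because the fractional sum is $<2$. Label the two indices so that $w_a \le w_b$. Round $a$ up to $1$ and $b$ down to $0$; coverage then equals $I+1 = K_n$. For the budget, it suffices that $w_a \le w_a z_a + w_b z_b$. This follows from $w_a(1-z_a) \le w_a z_b \le w_b z_b$, where the first inequality uses $z_a+z_b\ge 1$. Hence the budget used does not exceed the LP's, which is at most $W_n$. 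The objective loss is $v_b z_b - v_a(1-z_a) \le V_{\max}(z_b + 1 - z_a) \le 2V_{\max}$.

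In both cases the rounded vector is feasible for \eqref{eq:01_knap_card}. Therefore $\mathrm{OPT}^{01}_n \ge \mathrm{OPT}^{\mathrm{LP}}_n - 2V_{\max}$, which is the claim. This bound is uniform in $n$, so dividing by $n$ gives the per-capita vanishing gap.

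The main obstacle is Case 2, where naive rounding down breaks coverage and naive rounding up may break the budget. The key step is the choice of which fractional item to round up: taking the lighter one, combined with $z_a+z_b\ge1$, is what keeps the budget intact. A secondary point is that feasibility of the 0--1 problem is not assumed separately; it comes out of the construction whenever the LP is feasible.
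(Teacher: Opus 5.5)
Your proof is correct and follows the paper's route: take an optimal extreme point of the LP, use Lemma \ref{lem:two_fractional} to get at most two fractional coordinates, round them to a feasible binary vector, and bound the loss by $2V_{\max}$. The paper only asserts that a feasibility-preserving rounding exists, whereas your case split supplies the argument: either the number $I$ of unit coordinates satisfies $I\ge K_n$, or $I=K_n-1$, in which case you round up the lighter fractional item and use $z_a+z_b\ge 1$ to keep the budget intact.
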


\begin{proof}
Let $z^{\mathrm{LP}}_n$ be an optimal extreme point of \eqref{eq:LP_relax}.
By Lemma \ref{lem:two_fractional}, at most two coordinates are fractional. Construct a 0--1 vector $\tilde\pi \in \{0,1\}^n$ by rounding those fractional coordinates to either 0 or 1 so as to preserve feasibility. The objective loss relative to $z^{\mathrm{LP}}_n$
is at most the total value contributed by the at most two fractional components, bounded by $2 V_{\max}$ in absolute value. Since $\mathrm{OPT}^{01}_n$ is the best feasible 0--1 objective,
\[
\mathrm{OPT}^{\mathrm{LP}}_n - \mathrm{OPT}^{01}_n
\le
\mathrm{OPT}^{\mathrm{LP}}_n
-
\sum_{i=1}^n v_i \tilde\pi_i
\le
2 V_{\max}.
\]
Nonnegativity follows from \eqref{eq:chain}.
\end{proof}

\subsection{Asymptotic equivalence}

Define the per-capita optima:
\[
\bar V^{01}_n
:=
\frac{1}{n} \mathrm{OPT}^{01}_n,
\qquad
\bar V^{\mathrm{LP}}_n
:=
\frac{1}{n} \mathrm{OPT}^{\mathrm{LP}}_n.
\]

\begin{theorem}[Asymptotic equivalence: 0--1 vs.\ fractional]
\label{thm:asymp_equiv}
Under \eqref{eq:bounded},
\[
0
\le
\bar V^{\mathrm{LP}}_n
-
\bar V^{01}_n
\le
\frac{2 V_{\max}}{n}
\longrightarrow 0.
\]
Hence the 0--1 knapsack with minimum coverage is asymptotically
equivalent (in per-capita value) to its LP relaxation.
\end{theorem}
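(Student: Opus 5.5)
The statement follows from the $O(1)$ integrality gap in Lemma~\ref{lem:gap_O1}: I would obtain it by dividing that bound by $n$. Nonnegativity, $\bar V^{\mathrm{LP}}_n-\bar V^{01}_n\ge 0$, follows from Lemma~\ref{lem:lp_dominates} (equivalently \eqref{eq:chain}) after dividing by $n>0$. The upper bound $\mathrm{OPT}^{\mathrm{LP}}_n-\mathrm{OPT}^{01}_n\le 2V_{\max}$ holds for every $n$ under \eqref{eq:bounded}, and dividing by $n$ gives $2V_{\max}/n$. Since $V_{\max}$ does not depend on $n$, this tends to zero, and the squeeze yields the per-capita asymptotic equivalence. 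I would also note that the LP optimum is attained at an extreme point: the feasible polytope is nonempty whenever the 0--1 problem is, and it is bounded, so the fundamental theorem of LP applies. This matters because Lemma~\ref{lem:two_fractional} concerns extreme points only.

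The main obstacle is making the rounding step inside Lemma~\ref{lem:gap_O1} rigorous, because the bound rests entirely on it. That proof asserts that the at most two fractional coordinates can be rounded ``so as to preserve feasibility'' without saying how. I would verify this by cases. Let $a,b$ be the fractional indices, with $w_a\le w_b$, and let $s=z_a+z_b$ be their fractional sum.

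\emph{Case (i): $s<1$, including the case of a single fractional coordinate.} Rounding both down lowers cost, so the budget constraint still holds. The new count is $\sum_i z_i-s$. This is an integer that exceeds $\sum_i z_i-1\ge K_n-1$, so it is at least $K_n$ because $K_n$ is an integer.

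\emph{Case (ii): $1\le s<2$.} Set $\tilde\pi_a=1$ and $\tilde\pi_b=0$. The count becomes $\sum_i z_i-s+1$, an integer greater than $\sum_i z_i-1$, hence at least $K_n$. The cost changes by
\[
w_a(1-z_a)-w_b z_b\le w_b(1-z_a-z_b)\le 0,
\]
using $w_a\le w_b$, so the budget constraint still holds.

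In both cases only two coordinates move, each by less than $1$. The objective loss is therefore at most $|v_a|+|v_b|\le 2V_{\max}$, which confirms the bound used above. With this in place, the division by $n$ gives the theorem.
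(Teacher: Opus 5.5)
Your proof is correct and matches the paper's, which simply divides the bound of Lemma~\ref{lem:gap_O1} by $n$ and obtains nonnegativity from \eqref{eq:chain}. Your case analysis fills in the feasibility-preserving rounding that the paper's proof of Lemma~\ref{lem:gap_O1} only asserts, and it is valid. When $z_a+z_b<1$ you round both fractional coordinates down; otherwise you round the cheaper one up and the other down. The argument correctly uses the integrality of $K_n$ and $w_a\le w_b$.
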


\begin{proof}
Divide the bound in Lemma \ref{lem:gap_O1} by $n$ and let $n \to \infty$.
\end{proof}

\begin{remark}[Vanishing per-capita integrality gap]
\label{rem:squeeze}
The result in Theorem \ref{thm:asymp_equiv} follows from a simple comparison argument.
Let
\[
a_n := \bar V^{\mathrm{LP}}_n - \bar V^{01}_n,
\qquad
b_n := \frac{2V_{\max}}{n}.
\]
By Lemma \ref{lem:gap_O1}, we have
\[
0 \le a_n \le b_n,
\]
and clearly $b_n \to 0$ as $n \to \infty$.
Hence, by the squeeze (comparison) theorem,
\[
\bar V^{\mathrm{LP}}_n - \bar V^{01}_n \to 0.
\]

\noindent
Intuitively, although the LP relaxation may improve upon the 0--1 solution,
the total improvement is uniformly bounded by a constant ($2V_{\max}$).
When normalized by the sample size $n$, this gain vanishes,
so that the two problems become asymptotically equivalent in per-capita value (that is, by dividing by $n$).
\end{remark}

\subsection{Asymptotic optimality of the greedy threshold rule}

Assume the nondegeneracy condition
\begin{equation}
\label{eq:nondeg}
\mathbb{P}\big( v - \lambda w + \nu = 0 \big) = 0
\quad
\text{for the relevant optimal } (\lambda,\nu).
\end{equation}

\begin{theorem}[Asymptotic optimality of the greedy threshold rule]
\label{thm:greedy_asymp}
Assume \eqref{eq:bounded} and \eqref{eq:nondeg}.
Let $(\lambda_n,\nu_n)$ be multipliers such that the greedy policy
$\pi^G(\lambda_n,\nu_n)$ in \eqref{eq:greedy_threshold}
is feasible and satisfies
\[
\sum_{i=1}^n v_i \pi_i^{G}(\lambda_n,\nu_n)
\ge
\mathrm{OPT}^{\mathrm{LP}}_n - o(n).
\]

\noindent
Then
\[
\frac{1}{n}
\sum_{i=1}^n v_i \pi_i^{G}(\lambda_n,\nu_n)
-
\bar V^{01}_n
\longrightarrow 0.
\]
\end{theorem}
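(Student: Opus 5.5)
The plan is a two-sided squeeze: bound the per-capita greedy value from above by $\bar V^{01}_n$ using feasibility, and from below by $\bar V^{01}_n$ minus a vanishing term using the hypothesis together with Theorem \ref{thm:asymp_equiv}.

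For brevity, write $G_n := \sum_{i=1}^n v_i \pi_i^{G}(\lambda_n,\nu_n)$.

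\emph{Upper bound.} By assumption, $\pi^G(\lambda_n,\nu_n)\in\{0,1\}^n$ satisfies both the budget constraint $\sum_i w_i\pi_i^G\le W_n$ and the coverage constraint $\sum_i \pi_i^G\ge K_n$. It therefore belongs to the feasible set $\mathcal F_{01}$ of \eqref{eq:01_knap_card}. Since $\mathrm{OPT}^{01}_n$ is the maximum of the objective over $\mathcal F_{01}$, we get $G_n\le \mathrm{OPT}^{01}_n$. Dividing by $n$ gives
\[
\tfrac1n G_n-\bar V^{01}_n\le 0 .
\]

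\emph{Lower bound.} By hypothesis, $G_n\ge \mathrm{OPT}^{\mathrm{LP}}_n-r_n$ with $r_n=o(n)$. Lemma \ref{lem:lp_dominates} gives $\mathrm{OPT}^{\mathrm{LP}}_n\ge \mathrm{OPT}^{01}_n$, so $G_n\ge \mathrm{OPT}^{01}_n-r_n$. Dividing by $n$ yields
\[
\tfrac1n G_n-\bar V^{01}_n\ge -r_n/n\to 0 .
\]
This step uses only $\mathrm{OPT}^{01}_n\le\mathrm{OPT}^{\mathrm{LP}}_n$, so the $O(1)$ gap of Lemma \ref{lem:gap_O1} is not actually needed. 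Alternatively, one can write $\bar V^{\mathrm{LP}}_n=\bar V^{01}_n+O(1/n)$ via Theorem \ref{thm:asymp_equiv} and absorb this into the $o(1)$ term.

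Combining the two bounds, $-r_n/n\le \tfrac1nG_n-\bar V^{01}_n\le 0$, and the squeeze argument of Remark \ref{rem:squeeze} gives convergence to zero.

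There is essentially no hard step once feasibility and the $o(n)$ near-LP-optimality are taken as hypotheses. The boundedness condition \eqref{eq:bounded} and the nondegeneracy condition \eqref{eq:nondeg} are not needed for the implication itself. Their role is to guarantee that multipliers $(\lambda_n,\nu_n)$ with these properties exist.

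The one point needing care is interpretation. If the $o(n)$ term is meant in probability, because the $v_i$ are random estimates, the same sandwich holds pathwise and the conclusion holds in the same mode of convergence.

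If one also wanted to justify the existence of such multipliers, I would proceed as follows. Under \eqref{eq:nondeg}, ties $v_i-\lambda w_i+\nu=0$ occur for at most two indices almost surely. The greedy rule therefore differs from an LP-optimal extreme point (Lemma \ref{lem:two_fractional}) in $O(1)$ coordinates, giving a gap of at most $2V_{\max}=o(n)$. This existence claim is where the real obstacle lies, but it is outside the statement as posed.
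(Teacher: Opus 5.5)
Your argument is correct and is the same squeeze the paper uses: feasibility of $\pi^G(\lambda_n,\nu_n)$ bounds the per-capita greedy value above by $\bar V^{01}_n$, and the $o(n)$ near-LP-optimality hypothesis bounds it below by $\bar V^{01}_n - o(1)$. Your refinement is also right. Where the paper cites Theorem~\ref{thm:asymp_equiv} for the lower bound, only the trivial inequality $\mathrm{OPT}^{01}_n\le\mathrm{OPT}^{\mathrm{LP}}_n$ of Lemma~\ref{lem:lp_dominates} is needed, so \eqref{eq:bounded} and \eqref{eq:nondeg} play no role in the implication itself.
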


\begin{proof}
By assumption,
\[
\frac{1}{n}
\sum_{i=1}^n v_i \pi_i^{G}(\lambda_n,\nu_n)
\ge
\bar V^{\mathrm{LP}}_n - o(1).
\]

\noindent
By Theorem \ref{thm:asymp_equiv},
\[
\bar V^{\mathrm{LP}}_n
=
\bar V^{01}_n + o(1).
\]

\noindent
Hence
\[
\frac{1}{n}
\sum_{i=1}^n v_i \pi_i^{G}(\lambda_n,\nu_n)
\ge
\bar V^{01}_n - o(1).
\]

\noindent
Feasibility implies the greedy value cannot exceed $\bar V^{01}_n$
by definition of $\mathrm{OPT}^{01}_n$.
Therefore the difference converges to zero.
\end{proof}

\subsection{A practical greedy algorithm}

The previous result establishes that any feasible greedy policy achieving near-optimal performance for the LP relaxation is asymptotically optimal for the original 0--1 problem. The purpose of this section is to provide a constructive procedure to obtain such a policy. In particular, the following algorithm implements a primal--dual search for the Lagrange multiplier $\lambda$ and generates a greedy threshold rule that approximates the LP optimum and satisfies the feasibility conditions required in Theorem \ref{thm:greedy_asymp}.

\bigskip
\noindent
Consider again the empirical problem
\[
\max_{\pi\in\{0,1\}^n}\ \sum_{i=1}^n v_i\pi_i
\quad\text{s.t.}\quad
\sum_{i=1}^n w_i\pi_i \le W_n,\qquad \sum_{i=1}^n \pi_i \ge K_n,
\]
with values $v_i\in\mathbb{R}$ and strictly positive costs $w_i>0$.

\bigskip
\noindent
\textit{Feasibility check}. A necessary and sufficient condition for feasibility is that the minimum cost of selecting $K_n$
items does not exceed the budget:
\[
\sum_{j=1}^{K_n} w_{(j)} \le W_n,
\]
where $w_{(1)}\le \cdots \le w_{(n)}$ are the ordered costs. If it fails, the problem is infeasible.

\bigskip
\noindent
\textit{Greedy-Lagrangian principle}. For $\lambda\ge 0$, define the Lagrangian score
\[
a_i(\lambda):=v_i-\lambda w_i.
\]
For each $\lambda$, we construct a feasible 0--1 allocation $\pi^G(\lambda)$ by:
(i) enforcing coverage by taking the $K_n$ largest scores $a_i(\lambda)$,
(ii) then adding extra units with positive score, in decreasing score order, as long as budget allows. This yields a one-parameter family of feasible policies. We then choose $\lambda$ by bisection
to make the budget approximately tight (frontier solution). Below we show a more detailed procedural presentation of this algorithm. 

\newpage

\begin{algorithm}[h]
\caption{Greedy-Lagrangian with minimum coverage and budget (GLC)}
\label{alg:GLC}
\begin{enumerate}
\item \emph{Input:} arrays $(v_i)_{i=1}^n$, $(w_i)_{i=1}^n$ with $w_i>0$, budget $W_n>0$, coverage $K_n\in\{1,\dots,n\}$,
tolerance $\varepsilon>0$, max iterations $T$, deterministic tie-break rule.

\item \emph{Feasibility check:}
sort indices by increasing $w_i$ and compute $W_{\min K}:=\sum_{j=1}^{K_n} w_{(j)}$.
If $W_{\min K}>W_n$, stop (infeasible).

\item \emph{Choose bisection bracket for $\lambda$:}
set $\lambda_L:=0$.
Choose $\lambda_U$ large enough so that the GLC selection at $\lambda_U$ has cost at most $W_n$.
A safe constructive choice is to increase $\lambda_U$ geometrically until feasibility holds:
initialize $\lambda_U:=1$, and while $\mathrm{Cost}(\pi^G(\lambda_U))>W_n$ set $\lambda_U:=2\lambda_U$.

\item \emph{For $t=1,\dots,T$:}
\begin{enumerate}
\item Compute scores $a_i:=v_i-\lambda_M w_i$, where $\lambda_M:=(\lambda_L+\lambda_U)/2$.

\item Sort indices by decreasing $a_i$ (break ties deterministically, e.g. smaller $w_i$ first, then index).

\item \emph{Coverage step (forced core):}
let $S:=\{i_{(1)},\dots,i_{(K_n)}\}$ be the first $K_n$ indices in the sorted list.
Set $\pi_i:=\mathbf{1}\{i\in S\}$ for all $i$.
Compute $B:=\sum_{i\in S} w_i$ and $V:=\sum_{i\in S} v_i$.

\item \emph{Budget repair if needed:}
if $B>W_n$, then $\lambda_M$ is too small (heavy items are not penalized enough).
Set $\lambda_L:=\lambda_M$ and continue to next iteration.

\item \emph{Greedy augmentation (positive-score add-ons):}
scan the remaining indices in sorted order $i_{(K_n+1)},\dots,i_{(n)}$.
For each such index $j$:
if $a_j\le 0$, stop the scan (all remaining scores are nonpositive);
else if $B+w_j\le W_n$, add it: set $\pi_j:=1$, update $S:=S\cup\{j\}$,
$B:=B+w_j$, $V:=V+v_j$.

\item \emph{Bisection update (budget frontier):}
if $B\le W_n$ and $W_n-B\le \varepsilon W_n$, stop and output $\pi$.
If $B<W_n-\varepsilon W_n$ (too much slack), decrease $\lambda$ to admit more items:
set $\lambda_U:=\lambda_M$.
Else (near-tight already handled), keep the bracket update as above.
\end{enumerate}

\item \emph{Output:} $\pi$ (and optionally $(\lambda_M,B,V)$).
\end{enumerate}
\end{algorithm}

\bigskip
\noindent
Algorithm \ref{alg:GLC} solves the empirical constrained selection problem by combining a Lagrangian relaxation of the budget constraint with an explicit enforcement of the minimum coverage requirement. The core idea is to introduce a one-dimensional dual parameter $\lambda$
interpreted as a shadow price of the budget, and to construct for each $\lambda$
a score
\[
a_i(\lambda) = v_i - \lambda w_i,
\]
which represents the net marginal contribution of unit $i$ once cost is penalized. For fixed $\lambda$, units are ranked by decreasing score.

\bigskip
\noindent
The algorithm enforces feasibility in two stages.
First, the coverage constraint is satisfied by selecting the $K_n$ units
with highest score (the ``forced core'').
Second, additional units with strictly positive score are greedily added
in descending order as long as the budget constraint is not violated.
This guarantees that the resulting allocation is always coverage-feasible
and budget-feasible whenever the problem itself is feasible.

\bigskip
\noindent
The remaining task is to determine the appropriate value of $\lambda$.
This is achieved through a bisection procedure.
Because the total budget usage $B(\lambda)$ induced by the greedy rule
is monotone non-increasing in $\lambda$, the dual parameter can be
adjusted until the resulting allocation lies on (or arbitrarily close to)
the budget frontier.
Intuitively, increasing $\lambda$ penalizes expensive units more heavily,
thereby reducing total expenditure; decreasing $\lambda$ relaxes the
penalty and increases expenditure.

\bigskip
\noindent
Thus, the algorithm implements a primal–dual search along a one–dimensional dual path. It produces a feasible 0--1 allocation that approximates the solution of the linear programming relaxation, and by the integrality gap result, is asymptotically optimal for the original combinatorial problem once normalized by sample size.

\bigskip
\noindent
It is worth comparing LP and GLC computational complexity. The LP relaxation problem is a linear program with $n$ variables and $m=2$ linear constraints (in addition to box constraints). Using standard interior-point methods, the worst-case computational complexity is polynomial, typically of order $O(n^3)$.
In practice, simplex-based solvers often perform well, but their worst-case complexity is exponential. Moreover, generic LP solvers do not fully exploit the specific knapsack structure of the problem.
In the GLC algorithm, each iteration requires sorting the $n$ scores, yielding complexity $O(n \log n)$ per iteration. As the value of $\lambda$ is determined via bisection, and the induced budget usage is monotone in $\lambda$, the number of iterations required to achieve tolerance $\varepsilon$ is
$O(\log(1/\varepsilon))$. Therefore, GLC overall complexity is
$O\big(n \log n \cdot \log(1/\varepsilon)\big),$ making this procedure scalable to large empirical OPL applications. 

\bigskip
\noindent
Moreover, the key difference between LP and GLC lies not only in computational complexity, but also in how each method exploits the structure of the constrained problem. The LP approach solves a global optimization problem using general-purpose methods, whereas the GLC algorithm exploits more in depth the knapsack structure of the problem, reducing the computation to sorting and one-dimensional dual search. Appendix A provides a numerical illustration of the GLC algorithm.

\section{Characterizing a \textit{rank-and-cut} approximation}

The Greedy--Lagrangian procedure described in Algorithm \ref{alg:GLC}
provides a principled way to approximately solve the empirical constrained knapsack problem. It is directly motivated by the dual characterization of the LP relaxation and implements a one-dimensional search over the shadow price $\lambda$ of the budget constraint.

\bigskip
\noindent
This approach is theoretically well-grounded and leads to
asymptotically optimal allocations; however, it requires iterative bisection, repeated sorting of scores, and careful numerical tuning. As a result, its computational cost is $O(n \log n \cdot \log(1/\varepsilon))$.

\bigskip
\noindent
This naturally raises the question of whether a simpler, non-iterative procedure can achieve comparable performance. A particularly appealing alternative is the \textit{rank-and-cut} (RC) rule, which orders units by the \textit{cost-effectiveness ratio} $v_i / w_i$ and selects the largest feasible prefix satisfying the budget and coverage constraints. This procedure requires only a single sorting step and has computational complexity $O(n \log n)$.
The key question is to understand to what extent this substantial computational simplification comes at an acceptable misallocation cost.

\bigskip
\noindent
By characterizing the rank-and-cut solution, this section provides a detailed theoretical analysis of when the rank-and-cut rule can be \textit{approximately} optimal. Our main result shows that the rank-and-cut achieves a comparable approximate per-capita welfare as the optimal policy whenever the heterogeneity induced by the minimum coverage constraint does not significantly distort the ranking. The key mechanism is that, in this case, misallocation is confined to a shrinking neighborhood of the decision boundary, where the marginal welfare contribution vanishes.

\subsection{Setup}

To start with, consider again the Lagrangian of our baseline constrained optimization
\[
\mathcal{L}(\pi,\lambda,\nu)
=
\mathbb{E}\left[(\tau(X)-\lambda c(X)+\nu)\pi(X)\right]
+\lambda B - \nu \rho,
\]
where the optimal policy is
\[
\pi^*(x) = 1\{m^*(x) \ge 0\}
\]
with the \textit{margin} defined as
\[
m^*(x) = \tau(x) - \lambda^* c(x) + \nu^*
\]
We can provide a ratio representation of the optimal solution by dividing by $c(x)$, thus obtaining
\[
\frac{m^*(x)}{c(x)} = \frac{\tau(x)}{c(x)} - \lambda^* + \frac{\nu^*}{c(x)}
\]

\bigskip
\noindent
By defining:
\[
r(x) := \frac{\tau(x)}{c(x)}, \quad
b_{LP}(x) := \lambda^* - \frac{\nu^*}{c(x)}
\]
we have that
\[
\pi^*(x) = 1\{r(x) \ge b_{LP}(x)\}
\]
This is the optimal solution expressed as a threshold rule over $r(x)$. 

\bigskip
\noindent
Alternatively, the rank-and-cut decision rule is
\[
\pi^{RC}_n(x) = 1\{r(x) \ge t^*\},
\]
where it emerges clearly that the key difference between the LP and RC rules stands on the fact that the LP threshold depends on $x$ (that is, it is heterogeneous and thus non-constant), whereas the RC threshold is constant. 

\bigskip
\noindent
The welfare loss induced by the RC rule, can be expressed as
\[
\Delta_n = \frac{1}{n}\sum \tau_i(\pi_i^* - \pi_i^{RC})
\]
This welfare loss arises from a set of misallocations induced by the rank-and-cut (RC) rule relative to the optimal (LP) rule. We want to characterize such welfare loss, and study its behavior.
First, we define the \textit{misallocation region} as
\[
\mathcal{M}_n = \{x : \pi^*(x) \neq \pi^{RC}_n(x)\}
\]
Equivalently:
\[
\mathcal{M}_n =
\left\{
x:
(r(x)-t^*)(r(x)-b_{LP}(x)) < 0
\right\}
\]
This implies:
\[
r(x) \in
[\min\{t^*,b_{LP}(x)\}, \max\{t^*,b_{LP}(x)\})
\]
By recall the definition of margin, we can see that it can be also represented as:
\[
m^*(x) = c(x)(r(x)-b_{LP}(x))
\]
Hence, it follows that:
\[
|m^*(x)| = c(x)\,|r(x)-b_{LP}(x)|
\]
so that, if a misallocation occurs, it means that:
\[
|r(x)-b_{LP}(x)| \le |t^* - b_{LP}(x)|
\]
implying that:
\[
|m^*(x)| \le c(x)\,|t^* - b_{LP}(x)|
\]
Now, assume that the cost function is bounded this way
\[
0 < \underline{c} \le c(x) \le \bar{c}
\]
If we assume \textit{uniform convergence}, that is
\[
\sup_x |t^* - b_{LP}(x)| \le \delta_n
\]
then, it follows that
\[
\mathcal{M}_n \subseteq \{x : |m^*(x)| \le \bar{c}\delta_n\}
\]
This is an important result, suggesting that under thresholds uniform convergence, misallocation lies in a \textit{margin band}, the one around the optimal decision boundary $m^*(x) = 0$.
In other words, any disagreement between the rank-and-cut rule and the optimal policy can only occur for units whose dual margin is small. This is a crucial reduction. Instead of analyzing a complex combinatorial difference between policies, we reduce the problem to understanding the behavior of the margin near zero.

\bigskip
\noindent
An important consequence of this finding is that controlling $\delta_n$ is sufficient to control misallocation. The crucial assumption is:
\[
\sup_x |t^* - b_{LP}(x)| \le \delta_n \to 0,
\]
requiring that a constant threshold approximates a heterogeneous one. This is not automatic and deserves careful interpretation. Indeed, recall:
\[
b_{LP}(x) = \lambda^* - \frac{\nu^*}{c(x)}.
\]
Thus, heterogeneity arises entirely from the term:
\[
\frac{\nu^*}{c(x)}.
\]
One requires that this term does not vary too much across $x$. There are two special cases where this happens: \newline

\noindent
\textit{Case 1: Low cost heterogeneity}. If:
\[
c(x) \approx c_0,
\]
then:
\[
\frac{\nu^*}{c(x)} \approx \frac{\nu^*}{c_0},
\]
so $b_{LP}(x)$ is approximately constant. This is the easiest and more transparent case. \newline

\noindent
\textit{Case 2: Weak coverage constraint}. If:

\[
\nu^* \approx 0,
\]
then:
\[
b_{LP}(x) \approx \lambda^*,
\]
which is constant. In this case, the problem reduces to the standard budget-constrained setting. \newline

\noindent
More generally, the condition can be interpreted as requiring that the coverage constraint does not significantly distort the cost-effectiveness ranking. What happens if this assumption fails? If
$|t^* - b_{LP}(x)|$ does not become small uniformly, then the misallocation region is not confined to a thin band. This implies that misallocation may occur far from the boundary, high-value units may be misclassified, and the welfare gap may not vanish. Thus, the assumption is not merely technical, but it identifies the exact condition under which rank-and-cut remains reliable.

\bigskip
\noindent
Figure \ref{fig:LP_RC_misallocation} shows the misallocation region obtained by comparing the LP and RC (rank-and-cut) decision boundaries. We consider the $(c(x),r(x))$ Cartesian plane, where $r(x)=\tau(x)/c(x)$ denotes the cost-effectiveness ratio. The LP rule is governed by the boundary $b_{LP}(c)=\lambda^*-\frac{\nu^*}{c}$, which is increasing in \(c\) whenever \(\nu^*>0\), since $\frac{d}{dc}b_{LP}(c)=\frac{\nu^*}{c^2}>0$. Hence, under LP, the minimum required cost-effectiveness ratio becomes more demanding as the individual cost increases. 

\bigskip
\noindent
By contrast, the RC rule is based on the constant threshold
$b_{RC}(c)=t^*$, which ignores cost heterogeneity except through the ratio $r(x)$ itself. The two boundaries intersect at the unique crossing point $ c^\times=\frac{\nu^*}{\lambda^*-t^*}$,
obtained by solving $\lambda^*-\frac{\nu^*}{c}=t^*$. This crossing point divides the plane into two qualitatively different misallocation regions.

\bigskip
\noindent
For low-cost units, that is, for $c(x)<c^\times$, one has $
b_{LP}(c(x))<t^*$, so the LP rule is more permissive than RC. Equivalently, there exists a region in which RC treats fewer units than LP. For high-cost units, that is, for $c(x)>c^\times$, the inequality reverses, that is, $b_{LP}(c(x))>t^*$,
so LP becomes more selective than RC, and RC may treat units that LP would reject.

\bigskip
\noindent
This figure therefore makes transparent the structural source of misallocation induced by RC relative to LP. As RC uses a flat cutoff, while LP uses a cost-dependent cutoff, RC cannot adapt to heterogeneity in treatment cost. As a consequence, it tends to be too restrictive in one portion of the cost space and too permissive in another. This visual argument helps clarify that the units' selection discrepancy between RC and LP is characterized by the interaction between treatment-effect heterogeneity and cost heterogeneity.

\begin{figure}[!h]
\centering
\includegraphics[width=0.7\textwidth]{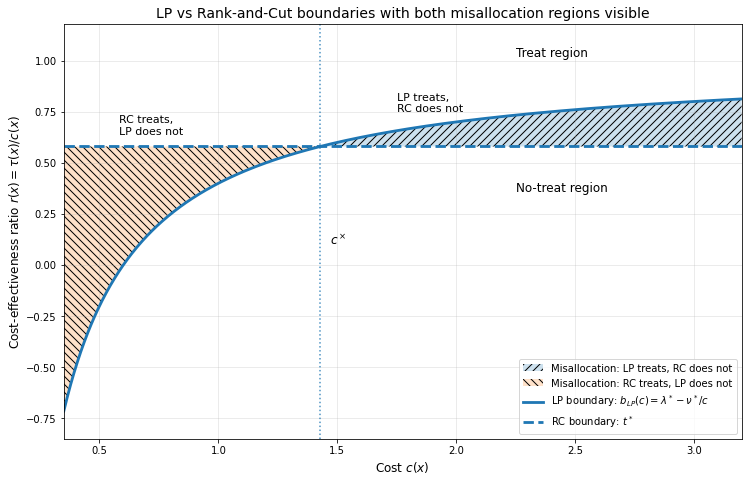}
\caption{Misallocation region: comparison between the LP and Rank-and-Cut decision boundaries: $(c(x),r(x))$ is the plane, where $r(x)=\tau(x)/c(x)$ denotes the cost-effectiveness ratio.}
\label{fig:LP_RC_misallocation}
\end{figure}

\subsection{RC welfare loss bound}

We now derive a general upper bound on the RC welfare loss, even when no uniform convergence is invoked.

\begin{theorem}[Welfare loss under threshold misspecification]
\label{thm:Wloss}
Suppose that $|\tau(X)| \le K$, $0 < \underline{c} \le c(X) \le \bar{c}$; the margin condition holds, that is, $P(|m^*(X)| \le t) \le C t$ for $t$ small. Then the welfare loss satisfies:
\[
\left|
\mathbb{E}[\tau(X)(\pi^*(X) - \pi^{RC}(X))]
\right|
\;\le\;
K C \cdot \mathbb{E}\big[c(X)\,|t^* - b_{LP}(X)|\big].
\]
\end{theorem}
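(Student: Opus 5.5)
The plan is to reduce the welfare loss to the probability of the misallocation region $\mathcal{M}=\{x:\pi^*(x)\neq\pi^{RC}(x)\}$, and then to control that probability with the margin condition, using the pointwise inclusion derived in the Setup subsection.

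\emph{Step 1 (crude bound on the integrand).} The difference $\pi^*(X)-\pi^{RC}(X)$ takes values in $\{-1,0,1\}$ and is nonzero exactly on $\mathcal{M}$. Since $|\tau(X)|\le K$,
\[
\left|\mathbb{E}[\tau(X)(\pi^*(X)-\pi^{RC}(X))]\right|\le \mathbb{E}\big[|\tau(X)|\,\mathbbm{1}\{X\in\mathcal{M}\}\big]\le K\,P(X\in\mathcal{M}).
\]

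\emph{Step 2 (pointwise margin inclusion).} I reuse the argument given before Figure \ref{fig:LP_RC_misallocation}, but without the uniform bound $\delta_n$. Write $m^*(x)=c(x)\,(r(x)-b_{LP}(x))$. If $x\in\mathcal{M}$, then $r(x)$ lies between $b_{LP}(x)$ and $t^*$, so $|r(x)-b_{LP}(x)|\le|t^*-b_{LP}(x)|$. Define $h(x):=c(x)\,|t^*-b_{LP}(x)|$. It follows that
\[
\mathcal{M}\subseteq\{x:|m^*(x)|\le h(x)\},
\]
which is a margin band of \emph{heterogeneous} width. The key structural observation is that $b_{LP}(x)=\lambda^*-\nu^*/c(x)$ depends on $x$ only through $c(x)$. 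Hence $h(X)=c(X)\,|t^*-\lambda^*+\nu^*/c(X)|=|(t^*-\lambda^*)c(X)+\nu^*|$ is $\sigma(c(X))$-measurable.

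\emph{Step 3 (margin condition with a random width; the main obstacle).} The stated margin condition $P(|m^*(X)|\le t)\le Ct$ applies to a deterministic width $t$. Plugging in $t=\sup_x h(x)$ would only give $KC\sup_x h(x)$, not the expectation. To obtain $\mathbb{E}[h(X)]$, I will read the margin condition conditionally on the cost, i.e.
\[
P\big(|m^*(X)|\le t\mid c(X)\big)\le Ct \quad\text{a.s.}
\]
This is the natural version here, because the band width varies only with $c$. Conditioning on $c(X)$ and evaluating at $t=h(X)$, which is fixed given $c(X)$, yields $P(X\in\mathcal{M}\mid c(X))\le C\,h(X)$. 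Taking expectations then gives $P(X\in\mathcal{M})\le C\,\mathbb{E}[c(X)|t^*-b_{LP}(X)|]$.

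If one insists on the unconditional condition only, a fallback is a layer-cake or stratification argument. Partition the range $[\underline c,\bar c]$ into cells on which $h$ is nearly constant, and apply the margin bound cell by cell. This would require the condition to hold within each cell, so it reduces to the same conditional requirement. I will therefore state this interpretation explicitly.

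I also need to handle the proviso ``for $t$ small''. Where $h(X)$ exceeds the admissible range, the bound can be kept by enlarging $C$: the probability is at most $1$, and $h$ is bounded by $|t^*-\lambda^*|\bar c+\nu^*$ because $c\le\bar c$.

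\emph{Step 4.} Combining Steps 1 and 3 gives $|\mathbb{E}[\tau(\pi^*-\pi^{RC})]|\le KC\,\mathbb{E}[c(X)|t^*-b_{LP}(X)|]$. I will close with the remark that, under the uniform convergence $\sup_x|t^*-b_{LP}(x)|\le\delta_n$, this recovers a loss of order $KC\bar c\,\delta_n$.
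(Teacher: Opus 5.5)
Your Steps 1, 2 and 4 coincide with the paper's proof. Both bound the loss by $K\,P(X\in\mathcal M_n)$, use the inclusion $\mathcal M_n\subseteq\{|m^*(X)|\le c(X)|t^*-b_{LP}(X)|\}$ from the Setup, and then invoke a margin condition. The genuine difference is Step 3.

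The paper's proof substitutes the random radius directly into the deterministic margin condition. Only in the Remark after the proof does it concede that this is informal. It then replaces the hypothesis by the functional margin condition (FMC) for all measurable $u$, and justifies FMC through a bounded conditional density of $m^*(X)$ given $X=x$.

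You instead observe that $h(X)=|(t^*-\lambda^*)c(X)+\nu^*|$ depends on $X$ only through $c(X)$. So a margin condition conditional on $c(X)$ suffices, and the freezing argument gives $P(\mathcal M_n)\le C\,\mathbb E[h(X)]$. Both routes strengthen the stated hypothesis. As you note, this is unavoidable: the unconditional bound $P(|m^*(X)|\le t)\le Ct$ gives no control over a width that is large on a small-probability cost stratum where $m^*$ concentrates near zero.

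Your strengthening is both weaker and better posed than the paper's. The paper's sufficient condition cannot hold as written. Since $m^*(X)$ is a function of $X$, its conditional law given $X=x$ is a point mass and has no density. Moreover, FMC over all measurable $u$ essentially forces a hard margin. Taking $u=|m^*|\mathbf{1}\{|m^*|\le\eta\}$ gives $P(|m^*(X)|\le\eta)\le C\eta\,P(|m^*(X)|\le\eta)$, hence $P(|m^*(X)|\le\eta)=0$ for $\eta<1/C$. Conditioning on $c(X)$ instead keeps the variation in $\tau(X)$. It only asks for a uniformly bounded conditional density of $\tau(X)$ near $\lambda^*c-\nu^*$ within each cost level.

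One small point concerns the ``for $t$ small'' proviso. Enlarging $C$ to absorb it yields the bound with $\max\{C,1/t_0\}$ in place of $C$. To obtain the displayed constant you need $h(X)\le t_0$ almost surely, which is exactly the paper's ``sufficiently small'' requirement.
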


\begin{proof}
We start from
\[
|\Delta|
=
\left|
\mathbb{E}[\tau(X)(\pi^*(X)-\pi^{RC}(X))]
\right|.
\]
Using $|\tau(X)| \le K$:
\[
|\Delta|
\le
K \cdot P(X \in \mathcal{M}_n).
\]
From Lemma 1
\[
\mathcal{M}_n \subseteq
\left\{
|m^*(X)| \le c(X)|t^* - b_{LP}(X)|
\right\}.
\]
Thus
\[
P(\mathcal{M}_n)
\le
P\left(
|m^*(X)| \le c(X)|t^* - b_{LP}(X)|
\right).
\]
Applying the margin condition
\[
P(|m^*(X)| \le u) \le C u,
\]
we obtain
\[
P(\mathcal{M}_n)
\le
C \cdot \mathbb{E}[c(X)|t^* - b_{LP}(X)|].
\]
Combining
\[
|\Delta|
\le
K C \cdot \mathbb{E}[c(X)|t^* - b_{LP}(X)|].
\]
\end{proof}

\bigskip
\noindent
This theorem confirms that the \textit{probability of misallocation} is controlled by the expected discrepancy between the constant threshold and the heterogeneous optimal threshold. \newline

\noindent
\textbf{Remark}. The previous theorem informally used the \textit{margin condition}, that is
\[
P(|m^*(X)|\le u)\le Cu,
\]
with a random radius \(u=c(X)|t^*-b_{LP}(X)|\). Since the standard margin condition is stated for deterministic \(u\), a more careful formulation is needed. We therefore impose the following strengthened version.

\medskip
\noindent
\textbf{Assumption} (\textit{functional margin condition}). There exists a constant \(C>0\) such that for every measurable function \(u:\mathcal X\to \mathbb R_+\), sufficiently small almost surely,
\[
P(|m^*(X)|\le u(X))\le C\,\mathbb E[u(X)].
\tag{FMC}
\]

\medskip
\noindent
This condition extends the standard margin condition from deterministic neighborhoods to random, covariate-dependent neighborhoods. We now show that the functional margin condition follows from a natural regularity assumption. Assume that the conditional distribution of $m^*(X)$ given $X=x$ admits a density $f_{m|X}(u|x)$, and that there exist constants $C>0$ and $\eta>0$ such that
\[
\sup_{|u|\le \eta,\; x \in \mathcal{X}} f_{m|X}(u|x) \le C.
\]
This means that the density of the margin is uniformly bounded in a neighborhood of zero. We now derive the functional margin condition step by step. By the law of total probability, we have
\[
P(|m^*(X)| \le u(X))
=
\mathbb{E}\left[
P(|m^*(X)| \le u(X) \mid X)
\right].
\]
Conditioning on $X=x$, the quantity $u(X)$ becomes deterministic
\[
P(|m^*(X)| \le u(X) \mid X=x)
=
\int_{-u(x)}^{u(x)} f_{m|X}(s|x)\,ds.
\]
Using the uniform bound on the density
\[
\int_{-u(x)}^{u(x)} f_{m|X}(s|x)\,ds
\le
\int_{-u(x)}^{u(x)} C\,ds
=
2C\,u(x).
\]
and taking expectation
\[
P(|m^*(X)| \le u(X))
\le
\mathbb{E}[2C\,u(X)]
=
2C\,\mathbb{E}[u(X)].
\]
showing that the functional margin condition holds (up to a multiplicative constant). The functional margin condition has a clear interpretation. The probability of being close to the decision boundary is proportional to the \emph{average width} of the neighborhood around that boundary. In other words: where the neighborhood is wide, it contributes more to the probability; where it is narrow, it contributes less. Overall, what matters is the expected width.

\medskip
\noindent
This condition implies that for each type of individual $x$, the fraction of units with $m^*(x)$ close to zero is limited, there is no excessive concentration of individuals who are nearly indifferent between treatment and no treatment. Thus, treatment decisions are typically well-separated and stable. Applying the functional margin condition to
\[
u(X) = c(X)\,|t^* - b_{LP}(X)|,
\]
we obtain
\[
P(\mathcal{M}_n)
\le
C\,\mathbb{E}[c(X)\,|t^* - b_{LP}(X)|].
\]
which shows that the probability of misallocation is controlled by the expected discrepancy between the constant threshold and the heterogeneous optimal threshold.

\section{RC and LP equivalence at constant threshold}

In this section, we show that when the LP threshold is approximately constant, as costs are constant over $X$, it is approximately equal to the constant threshold of the RC algorithm. As a consequence, the two assignment rules are approximately equivalent. Let us proceed as follows.

\bigskip
\noindent
Consider the previous constrained policy learning problem with constant costs:
\begin{equation}
\label{eq:max1}
\max_{\pi:\mathcal{X}\to\{0,1\}} \ \mathbb{E}[\tau(X)\pi(X)]
\end{equation}
subject to:
\begin{equation}
\underline{p} \le \mathbb{E}[\pi(X)] \le \overline{p},
\end{equation}
where $\underline{p}=\rho$, $\overline{p}=\frac{B}{c_0}$, and $c_0 > 0$ is a constant treatment cost. Fix the treatment share at a given value $m$ so that:
\begin{equation}
m := \mathbb{E}[\pi(X)] = \Pr(\pi(X)=1)
\end{equation}
We prove the following proposition:

\begin{proposition}[Threshold optimality under constant cost]
Fix any feasible treatment mass $m \in [\underline{p}, \overline{p}]$.  
An optimal solution to the previous constrained maximization is given by:
\begin{equation}
\pi_m(x) = \mathbf{1}\{\tau(x) \ge t(m)\}
\end{equation}
where $t(m)$ satisfies:
\begin{equation} 
\label{eq_sol1}
\Pr(\tau(X) \ge t(m)) = m
\end{equation}
If $\tau(X)$ has a continuous distribution, the threshold $t(m)$ is unique.
\end{proposition}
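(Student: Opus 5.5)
The plan is a rearrangement (exchange) argument, in the spirit of the Neyman--Pearson lemma, carried out at a fixed treated mass $m$. Once $m=\mathbb{E}[\pi(X)]$ is fixed, the budget and coverage constraints $\underline p\le \mathbb{E}[\pi]\le\overline p$ are automatically satisfied, because with constant cost $c_0$ the budget constraint is just $m\le B/c_0$. So the claim reduces to
\[
\max_{\pi}\ \mathbb{E}[\tau(X)\pi(X)] \quad\text{s.t.}\quad \mathbb{E}[\pi(X)]=m .
\]
The key pointwise inequality is this. Let $\pi_m=\mathbf 1\{\tau\ge t(m)\}$, and let $\pi$ be any competitor with the same mass. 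Then
\[
\bigl(\pi_m(x)-\pi(x)\bigr)\bigl(\tau(x)-t(m)\bigr)\ge 0 \qquad \text{for all } x .
\]
Indeed, where $\tau(x)\ge t(m)$ we have $\pi_m=1\ge\pi$, and where $\tau(x)<t(m)$ we have $\pi_m=0\le\pi$. Taking expectations gives
\[
\mathbb{E}[\tau(\pi_m-\pi)]\ \ge\ t(m)\,\mathbb{E}[\pi_m-\pi]=0,
\]
since both policies have mass $m$. Hence $\pi_m$ attains the maximum.

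The remaining steps are existence and uniqueness of $t(m)$. Let $G(t)=\Pr(\tau(X)\ge t)$, which is non-increasing and left-continuous, with limits $1$ as $t\to-\infty$ and $0$ as $t\to+\infty$. If $\tau(X)$ has a continuous distribution, $G$ is continuous. By the intermediate value theorem, some $t$ with $G(t)=m$ then exists for every $m\in(0,1)$, and $\underline p=\rho\in(0,1)$ keeps us in this range.

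For uniqueness, I would take $t(m)$ to be the generalized quantile $\sup\{t: G(t)\ge m\}$. Where $G$ is strictly decreasing, the solution is unique. On a flat piece of $G$ the solutions differ only on a $P_X$-null set of $\tau$-values, so the induced policy $\pi_m$ is unique almost surely. I would state uniqueness in this sense, adding strict monotonicity of the CDF near the quantile if the literal threshold is required.

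The main obstacle is the case where $\tau(X)$ has atoms. Then $G(t)=m$ may have no solution, and a deterministic threshold rule cannot have mass exactly $m$. I would treat this case in one of two ways. The first is to randomize on the atom $\{\tau=t\}$, as in the tie case of the LP relaxation, and note that the exchange inequality still holds because the integrand vanishes on the tie set. The second is to interpret the proposition under the continuity assumption it already invokes for uniqueness. In the main proof I would assume continuity, or at least that $m$ lies in the range of $G$, and add a remark covering atoms.

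Finally, I would connect the result to the surrounding section. Optimizing over $m\in[\underline p,\overline p]$ then gives the overall optimum, and with constant costs the LP threshold $b_{LP}=\lambda^*-\nu^*/c_0$ is constant. This makes it coincide with the RC cutoff $t^*=t(m^*)/c_0$ on the ratio scale.
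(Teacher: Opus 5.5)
Your proof is correct and rests on the same intuition as the paper's: at fixed mass, treat the upper tail of $\tau(X)$. The formalization, however, is genuinely different.

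The paper argues by pairwise swap. If some $\tau_1<\tau_2$ had $\pi(\tau_1)=1$ and $\pi(\tau_2)=0$, exchanging them would raise the objective, so an optimal policy must be an upper-tail set. Read literally, this is a necessity argument with three weaknesses. It presupposes that an optimizer exists. It writes $\pi$ as a function of $\tau$ rather than of $x$. And it swaps individual points, which carry zero probability when $\tau(X)$ is continuous, so a rigorous version would have to exchange sets of equal positive mass.

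Your integrated inequality $\mathbb{E}[(\pi_m-\pi)(\tau-t(m))]\ge 0$ avoids all three issues. It shows directly that $\pi_m$ beats every mass-$m$ competitor, including randomized ones, which is exactly the sufficiency claim the proposition makes. Its equality case also recovers the paper's necessity statement: equality forces $\pi=\pi_m$ almost surely off the tie set $\{\tau(X)=t(m)\}$.

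Your treatment of uniqueness is also more accurate than the paper's. Continuity of the law of $\tau(X)$ only makes the induced policy unique up to null sets. A literally unique $t(m)$ additionally needs $G$ to have no flat stretch at level $m$. The paper's corollary asserts $\Pr(\tau(X)>t_1)>\Pr(\tau(X)>t_2)$ for every $t_1<t_2$, which tacitly assumes this.

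The only loose end in your existence step is the upper endpoint. If $\overline p\ge 1$, then $m=1$ is admissible, but no finite $t$ need satisfy $G(t)=1$ (for instance when $\tau(X)$ is unbounded below). In that case the trivial optimum is to treat everyone.
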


\begin{proof}
Fix $m$. The problem becomes:
\begin{equation}
\max_{\pi:\mathbb{E}[\pi]=m} \mathbb{E}[\tau(X)\pi(X)]
\end{equation}

\noindent
Let $F$ denote the distribution of $\tau(X)$. Then:
\begin{equation}
\mathbb{E}[\tau(X)\pi(X)] = \int \tau \cdot \pi(\tau)\, dF(\tau)
\end{equation}

\noindent
Consider any admissible policy $\pi$. If there exist $\tau_1 < \tau_2$ such that $\pi(\tau_1)=1$ and $\pi(\tau_2)=0$, swapping assignments increases the objective. Thus, optimal policies must assign treatment to the upper tail of $\tau(X)$. Therefore:
\begin{equation}
\pi_m(x) = \mathbf{1}\{\tau(x) \ge t(m)\}
\end{equation}
with $t(m)$ such that:
\begin{equation}
\Pr(\tau(X)\ge t(m)) = m
\end{equation}
Moreover, if $F$ is continuous, the threshold $t(m)$ is uniquely determined.
\end{proof}

\begin{corollary}[Uniqueness of the threshold]
Let $t_1$ and $t_2$ be two positive real numbers, with $t_1 < t_2$, then:
\begin{equation}
\Pr(\tau(X) > t_1) > \Pr(\tau(X) > t_2)
\end{equation}
Hence two distinct thresholds cannot generate the same mass $m$, implying uniqueness.
\end{corollary}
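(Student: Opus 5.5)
The plan is to prove the corollary as a statement about the survival function $G(t):=\Pr(\tau(X)>t)$ of $\tau(X)$. Monotonicity is immediate. For $t_1<t_2$ we have the set inclusion $\{\tau(X)>t_2\}\subseteq\{\tau(X)>t_1\}$, so
\[
\Pr(\tau(X)>t_1)=\Pr(\tau(X)>t_2)+\Pr(t_1<\tau(X)\le t_2)\ge \Pr(\tau(X)>t_2).
\]
The corollary claims strict inequality, which is equivalent to showing that $\Pr(t_1<\tau(X)\le t_2)>0$.

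This is the main obstacle. Continuity of $F$, which is the hypothesis of the preceding proposition, is not enough on its own. A continuous distribution can put zero mass on an interval, and then $G$ is flat there. So I will add the assumption the uniqueness claim actually needs: the distribution of $\tau(X)$ has connected support containing $[t_1,t_2]$. Equivalently, $F$ is strictly increasing on the relevant range, for instance because $\tau(X)$ has a density that is positive on an interval containing the candidate thresholds.

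Under that assumption the interval $(t_1,t_2]$ meets the support in a set of positive length. Any nondegenerate subinterval of the support carries positive mass by the definition of support. Hence $\Pr(t_1<\tau(X)\le t_2)>0$, and the strict inequality follows. The positivity of $t_1,t_2$ in the statement plays no role, and I will note that the argument works for any real thresholds inside the support.

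For the uniqueness conclusion, suppose $t_1\neq t_2$ both satisfy \eqref{eq_sol1} with the same mass $m\in[\underline p,\overline p]$. Continuity of $F$ gives $\Pr(\tau(X)\ge t)=\Pr(\tau(X)>t)$, so both thresholds satisfy $G(t_1)=G(t_2)=m$. This contradicts the strict monotonicity just shown, so $t(m)$ is unique.

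If the paper wants to keep only continuity of $F$, I will instead state the weaker conclusion. In that case $t(m)$ is unique up to intervals of zero $\tau$-mass. The policies $\mathbf 1\{\tau(x)\ge t_1\}$ and $\mathbf 1\{\tau(x)\ge t_2\}$ then differ only on a $P_X$-null set, so the induced policy $\pi_m$ is still unique almost everywhere. That suffices for the proposition, since any two such policies give the same welfare.
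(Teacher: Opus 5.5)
Your argument has the same skeleton as the paper's. The paper gives no separate proof of the corollary; it simply asserts that $t\mapsto\Pr(\tau(X)>t)$ is strictly decreasing and reads off uniqueness, exactly as in your last step. The difference is that you actually check the strict inequality, and you are right that nothing the paper assumes implies it. As stated (arbitrary positive $t_1<t_2$, with at most continuity of $F$), it is false. Under the bound $|\tau(X)|\le K$ that the paper itself imposes in Theorem~\ref{thm:Wloss}, any $K<t_1<t_2$ gives $\Pr(\tau(X)>t_1)=\Pr(\tau(X)>t_2)=0$. Likewise, a continuous law with a gap in its support makes the survival function flat across the gap.

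So your extra hypothesis (the support of $\tau(X)$ is an interval, i.e.\ $F$ is strictly increasing where it matters) is a necessary repair, not a weakness. Your remark that positivity of the thresholds is irrelevant is also apt: with a binding coverage requirement, $t(m)$ can well be negative. Your fallback, uniqueness of $\pi_m$ up to a $P_X$-null set, is the correct form of the Proposition's uniqueness claim under continuity alone. It does not even need continuity, since $\Pr(\tau(X)\ge t_1)=\Pr(\tau(X)\ge t_2)$ with $t_1<t_2$ already gives $\Pr(t_1\le\tau(X)<t_2)=0$.

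One step needs tightening. Your monotonicity argument assumes that a particular interval $[t_1,t_2]$ lies in the support, but in the uniqueness step you apply it to two arbitrary solutions of \eqref{eq_sol1}. Add the observation that if the support is an interval with endpoints $a<b$ and $F$ is continuous, then $\Pr(\tau(X)\ge t)=m\in(0,1)$ forces $a<t<b$. Hence any two solutions automatically satisfy your hypothesis. Since $m\ge\underline{p}=\rho>0$, the only remaining case is $m=1$, which is admissible when $\overline{p}=B/c_0\ge 1$. There, every $t\le a$ solves \eqref{eq_sol1} (when $a$ is finite), and only your almost-everywhere version of uniqueness holds.
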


\noindent
The previous proposition and corollary set that, if any solution (\ref{eq_sol1}) to problem (\ref{eq:max1}) with fixed treatment mass $m$ admits a constant threshold, this threshold is unique. As both LP and RC comply with solution (\ref{eq_sol1}), when they admit a constant threshold, they are approximately equal. When is the LP rule admitting a constant threshold? It happens either when treatment costs are constant, or when they are heterogeneous, but $\nu$ is very close to zero. Therefore, four configurations can arise:

\begin{enumerate}
\item $c(X)$ heterogeneous, $\nu$ large $\Rightarrow$ high misallocation
\item $c(X)$ heterogeneous, $\nu$ small $\Rightarrow$ near equivalence
\item $c(X)$ constant, $\nu$ large $\Rightarrow$ exact equivalence
\item $c(X)$ constant, $\nu$ small $\Rightarrow$ exact equivalence
\end{enumerate}

\noindent
These prediction can be verified via Monte Carlo simulations across the four regimes, as proposed in section \ref{sec:mc2}.

\subsection{The \textit{rank-and-cut} implementation}

Motivated by the previous results,
we now introduce a particularly simple and computationally efficient
implementation of the constrained allocation problem. Rather than solving the 0--1 knapsack problem directly, or relying on
the Greedy--Lagrangian (GLC) procedure, we consider a rank-based rule
that orders units according to their cost-effectiveness ratio
\[
s_i := \frac{\hat\tau_i}{c_i}.
\]
This ratio can be interpreted as the estimated gain per unit of cost,
and provides a natural scalar index for prioritizing treatment.
The resulting procedure selects individuals in decreasing order of
$s_i$, and chooses the largest feasible prefix satisfying both the
budget constraint and the minimum coverage requirement.

\medskip
\noindent
The appeal of this approach lies in its simplicity: the algorithm
requires only a single sorting step followed by a linear scan over
feasible cutoffs, yielding an overall computational complexity of
$O(n \log n)$. In contrast to dual-based methods such as GLC, no iterative tuning of shadow prices is required. Importantly, despite its heuristic nature, the RC has a bounded welfare loss that is easily characterizable. The algorithm is described below.

\smallskip

{\small
\begin{algorithm}[H]
\caption{Cost-Effectiveness Ranking under Budget and Minimum Coverage}
\label{alg:opl_budget_rank}
\begin{enumerate}

\item \emph{Input.}
Estimated individual effects $(\hat\tau_i)_{i=1}^n$, individual costs $(c_i)_{i=1}^n$ with $c_i>0$,
total budget $C>0$, and minimum number treated $N^\ast\in\{1,\dots,n\}$.

\item \emph{Compute cost-effectiveness scores.}
For each $i$, define
\[
s_i := \frac{\hat\tau_i}{c_i}.
\]

\item \emph{Sort by decreasing score.}
Reorder units so that
\[
s_{(1)} \ge s_{(2)} \ge \cdots \ge s_{(n)}.
\]

\item \emph{Compute cumulative impact and cost.}
For each cutoff $k=1,\dots,n$, compute
\[
T_k := \sum_{j=1}^{k} \hat\tau_{(j)},
\qquad
C_k := \sum_{j=1}^{k} c_{(j)}.
\]

\item \emph{Identify feasible cutoffs.}
A cutoff $k$ is feasible if
\[
k \ge N^\ast
\quad\text{and}\quad
C_k \le C.
\]
If no feasible $k$ exists, stop (infeasible).

\item \emph{Choose optimal cutoff.}
Select
\[
k^\ast \in \arg\max\{T_k:\ k\ge N^\ast,\ C_k\le C\}.
\]

\item \emph{Construct policy.}
Define
\[
\pi_{(i)} := \mathbf{1}\{i \le k^\ast\},
\qquad i=1,\dots,n,
\]
and map back to the original order.

\item \emph{Output.}
Return $\pi$, total impact $\sum_i \hat\tau_i\pi_i$,
and total cost $\sum_i c_i\pi_i$.

\end{enumerate}
\end{algorithm}
}

\newpage

\section{Monte Carlo simulation 1: GLC vs. LP}

\subsection{Experimental setup}

We consider a simulation framework for evaluating the behavior of the GLC algorithm in a comparison with LP learning rule under budget and coverage constraints.
Within a controlled environment, where the true treatment effects and costs are known, the goal is to compare three allocation rules: (i) the exact optimal policy (OPT), which represents our oracle benchmark; (ii) the linear programming relaxation (LP); and (iii) the greedy--Lagrangian rule (GLC). 

\bigskip
\noindent
Our data generating process (DGP), considers a population of $n$ individuals indexed by $i = 1, \dots, n$, each characterized by a vector of observable features:
\[
X_i \in \mathbb{R}^d.
\]
In all simulations, covariates are generated as:
\[
X_i \sim \mathcal{N}(0, I_d),
\]
where $I_d$ denotes the $d \times d$ identity matrix, implying that covariates are independent across individuals. The \textit{individual treatment effect} is defined as a function of the covariates:
\[
\tau_i = \tau(X_i).
\]
In the baseline specification, we use a linear structure with only two covariates:
\[
\tau_i = X_{i1} + 0.5 X_{i2}.
\]
Each individual is associated with a strictly positive cost:
\[
w_i = w(X_i),
\]
which determines the budget required to treat that individual. To induce heterogeneity, we consider an exponential specification:
\[
w_i = \exp(\gamma X_{i1}),
\]
where $\gamma > 0$ controls the dispersion of costs. In our main simulations we set $\gamma = 2$, generating a high degree of cost heterogeneity. For completeness, we define potential outcomes as:
\[
Y_i(0) = \varepsilon_{i0}, \qquad
Y_i(1) = \tau_i + \varepsilon_{i1},
\]
where
\[
\varepsilon_{i0}, \varepsilon_{i1} \sim \mathcal{N}(0, \sigma^2)
\]
are independent noise terms. Since the true treatment effect $\tau_i$ is known, we evaluate policies using the \textit{oracle welfare}:
\[
V(\pi) = \sum_{i=1}^n \tau_i \pi_i.
\]

\bigskip
\noindent
The policy maker chooses a binary treatment rule:
\[
\pi_i \in \{0,1\},
\]
subject to two constraints: (i) budget constraint: $\sum_{i=1}^n w_i \pi_i \le W$; and coverage constraint: $\sum_{i=1}^n \pi_i \ge K$, where: $W = n \cdot C$, and $K = \lceil n \rho \rceil$. Here, $C$ represents the budget per capita and $\rho$ the minimum fraction of treated individuals. It goes without saying that the policy maker's objective is:
\[
\max_{\pi \in \{0,1\}^n}
\sum_{i=1}^n \tau_i \pi_i
\quad
\text{s.t.}
\quad
\sum_{i=1}^n w_i \pi_i \le W,
\quad
\sum_{i=1}^n \pi_i \ge K.
\]

\bigskip
\noindent
We evaluate four different \textit{allocation rules}:

\bigskip
\noindent
\textit{OPT (exact solution).}
The exact solution is obtained by solving the \textit{combinatorial} optimization problem using mixed-integer linear programming (MILP). This provides the benchmark optimal allocation of our experiment.

\bigskip
\noindent
\textit{LP (relaxation).} The binary constraint $\pi_i \in \{0,1\}$ is relaxed to $\pi_i \in [0,1]$, yielding a linear program. This provides an upper bound on the optimal value.

\bigskip
\noindent
\textit{GLC (greedy--Lagrangian).}
The GLC rule assigns a score, $a_i(\lambda) = \tau_i - \lambda w_i$, where $\lambda$ is a Lagrange multiplier. Individuals are ranked according to this score, and the allocation is constructed by: (i) first enforcing the coverage constraint; and (ii) then adding units with positive adjusted gain while respecting the budget. The multiplier $\lambda$ is selected via \textit{bisection} to satisfy the budget constraint.

\bigskip
\noindent
In order to evaluate each method, we use the following performance metrics:

\begin{itemize}
\item \textit{Total welfare:} $\sum_i \tau_i \pi_i$,
\item \textit{Per-capita welfare:} $\frac{1}{n} \sum_i \tau_i \pi_i$,
\item \textit{Regret:}
\[
\text{Regret} = V(\pi^{OPT}) - V(\pi),
\]
\item \textit{Per-capita regret:}
\[
\frac{1}{n} \left( V(\pi^{OPT}) - V(\pi) \right),
\]
\item \textit{Integrality gap (LP only):}
\[
V(\pi^{LP}) - V(\pi^{OPT}).
\]
\end{itemize}

\bigskip
\noindent
We devise a Monte Carlo experiment, repeating the analysis across different sample sizes from 50 to 500 by step of 25. For each value of $n$, we generate $R = 50$ independent datasets. For each replication, we proceed as follows: (i) we generate $(X_i, \tau_i, w_i)$; (ii) we solve the allocation problem using all methods; (iii) we compute welfare and regret; (iv) we aggregate results across replications. The simulation evaluates how close the GLC policy rules is to the optimal solution, and how its performance evolves as the sample size increases.

\subsection{Experimental results}

This section presents the results of the Monte Carlo experiment described above. Table~\ref{tab:mc_main_results} reports the average results of our Monte Carlo, while Figure~\ref{fig:regret_vs_n} complements the table by illustrating the behavior of the LP and GLC regrets across increasing values of $n$.

\begin{table}[htbp]
\centering
\caption{Monte Carlo comparison of OPT, LP and GLC}
\label{tab:mc_main_results}
\begin{tabular}{rccccc}
\toprule
& \multicolumn{2}{c}{Per-capita value} & \multicolumn{1}{c}{GLC} & \multicolumn{2}{c}{LP} \\
\cmidrule(lr){2-3}\cmidrule(lr){4-4}\cmidrule(lr){5-6}
$n$ & OPT & GLC & regret & gap & frac. comp. \\
\midrule
 50  & 0.2001 & 0.1954 & 0.0047 & 0.0044 & 1.6 \\
100  & 0.2045 & 0.1986 & 0.0059 & 0.0010 & 1.3 \\
150  & 0.2295 & 0.2272 & 0.0023 & 0.0006 & 1.3 \\
200  & 0.2120 & 0.2109 & 0.0010 & 0.0004 & 1.3 \\
250  & 0.2113 & 0.2098 & 0.0015 & 0.0002 & 1.0 \\
300  & 0.2138 & 0.2120 & 0.0017 & 0.0002 & 1.2 \\
350  & 0.2091 & 0.2081 & 0.0010 & 0.0001 & 1.3 \\
400  & 0.2070 & 0.2056 & 0.0014 & 0.0001 & 1.6 \\
450  & 0.2061 & 0.2051 & 0.0011 & 0.0001 & 1.4 \\
500  & 0.2112 & 0.2099 & 0.0013 & 0.0001 & 1.2 \\
\bottomrule
\end{tabular}
\begin{flushleft}
\footnotesize
Notes: OPT value = mean per-capita objective value under the exact MILP solution. 
GLC value = mean per-capita objective value under the Greedy--Lagrangian with coverage rule. 
GLC regret = mean per-capita regret relative to OPT. 
LP value = mean per-capita objective value under the LP relaxation. 
LP gap = mean per-capita integrality gap, i.e.\ LP minus OPT. 
LP frac. = mean number of fractional LP components.
\end{flushleft}
\end{table}

\noindent
Table \ref{tab:mc_main_results} highlights several important patterns regarding the relative performance of OPT, LP, and GLC across different sample sizes.

\bigskip
\noindent
First, as expected, OPT consistently delivers the highest per-capita value, serving as the benchmark for all comparisons. The GLC rule performs remarkably well, with a per-capita value that is very close to OPT even for relatively small sample sizes. The associated regret is modest already at $n=50$ (around $0.0047$) and decreases rapidly as $n$ increases, stabilizing around values close to $0.001$ for $n \geq 200$. This provides strong empirical evidence that GLC is an accurate approximation of the optimal allocation, even in finite samples.

\bigskip
\noindent
Second, the LP relaxation exhibits a very small integrality gap throughout, which shrinks quickly with $n$. Starting from $0.0044$ at $n=50$, the gap drops to negligible levels (around $0.0001$--$0.0002$) for $n \geq 250$. This confirms that the LP solution is an excellent proxy for the combinatorial optimum in per-capita terms, supporting the theoretical argument of asymptotic equivalence between LP and OPT.

\bigskip
\noindent
Third, the number of fractional components in the LP solution remains consistently low, fluctuating around $1$--$2$ across all sample sizes. This is fully consistent with the known structure of LP extreme points under a small number of global constraints (budget and coverage), and further explains why the integrality gap is negligible in practice.

\begin{figure}[!h]
\centering
\includegraphics[width=0.7\textwidth]{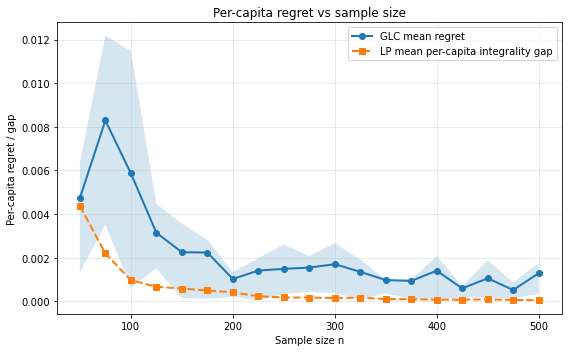}
\caption{Per-capita regret of GLC and integrality gap of LP as a function of sample size. The solid line reports the mean regret of GLC relative to OPT, while the dashed line reports the mean per-capita integrality gap of the LP relaxation. Shaded areas represent variability across Monte Carlo replications (interquantile range).}
\label{fig:regret_vs_n}
\end{figure}

\bigskip
\noindent
Figure \ref{fig:regret_vs_n} provides a graphical counterpart to the Monte Carlo results. Here, we can  graphically appreciate how the per-capita regret of GLC decreases rapidly with the sample size and stabilizes at very low levels. Moreover, the LP integrality gap is uniformly small and quickly converges to zero, becoming negligible already for moderate sample sizes.

\section{Monte Carlo simulation 2: LP and RC equivalence} \label{sec:mc2}

\subsection{Data Generating Process and Simulation Design}
Let $A \in \{0,1\}$ be a binary treatment variable, $X \sim \mathcal{N}(0,1)$ a continuous confounder, and define a treatment framework with potential outcomes 
\[
Y(0) = \beta_0 X + \varepsilon_0, \quad \varepsilon_0 \sim \mathcal{N}(0,1),
\]
\[
Y(1) = \beta_1 X + \gamma X^2 + \varepsilon_1, \quad \varepsilon_1 \sim \mathcal{N}(0,1).
\]
and observed outcome
\[
Y = Y(0) + A \cdot (Y(1) - Y(0)).
\]
The Conditional Average Treatment Effect (CATE) is thus defined as
\[
\tau(X) = (\beta_1 - \beta_0) X + \gamma X^2,
\]
which introduces nonlinear heterogeneity across individuals.
For each individual characterized by $X$, we define an individual treatment cost as
\[
c(X) = c_0 + \delta |X|,
\]
where $\delta \geq 0$ controls cost heterogeneity.
We impose the following budgetary and coverage constrains
\[
\frac{1}{n} \sum_{i=1}^n c_i \pi_i \leq B, \quad
\frac{1}{n} \sum_{i=1}^n \pi_i \geq \rho.
\] 

\bigskip
\noindent
In this simulation design, we fix a sample size $n = 500$ and a number of replication $R = 100$. For each replication, we compute: (i) the LP allocation $\pi^{LP}$ (solution of the relaxed problem); (ii) the RC allocation $\pi^{RC}$ (the rank-and-cut rule calibrated to match LP coverage); (iii) the misallocation area:
$
\mathcal{M} = \frac{1}{n} \sum_{i=1}^n \mathbf{1}\{\pi_i^{LP} \neq \pi_i^{RC}\}.
$
Results are averaged over the $R$ replications. We consider the following four simulation scenarios:
\begin{enumerate}
\item High cost heterogeneity, high coverage: $\delta = \delta_H$, $\rho = \rho_H$.
\item High cost heterogeneity, low coverage: $\delta = \delta_H$, $\rho = \rho_L$.
\item Constant cost, high coverage: $\delta = 0$, $\rho = \rho_H$.
\item Constant cost, low coverage: $\delta = 0$, $\rho = \rho_L$.
\end{enumerate}
In the simulations, we set
$\delta_H = 1$, $\rho_H = 0.5$, and $\rho_L = 0.1$.
Table \ref{tab:mcLPRC} show simulation results.

\bigskip
\noindent
\begin{table}[H]
\centering
\caption{Misallocation area between LP and RC rules}
\label{tab:mcLPRC}
\begin{tabular}{lccccc}
\hline
Scenario & Cost Heterogeneity & Coverage ($\rho$) & Mean $\nu$ & Status & Misallocation Area \\
\hline
(1) High $\delta$, High $\rho$ & High & 0.5 & 0.842 & Binding & 0.1520 \\
(2) High $\delta$, Low $\rho$  & High & 0.1 & 0.000 & Slack   & 0.0010 \\
(3) $\delta = 0$, High $\rho$  & None & 0.5 & 0.000 & Slack   & 0.0020 \\
(4) $\delta = 0$, Low $\rho$   & None & 0.1 & 0.000 & Slack   & 0.0020 \\
\hline
\end{tabular}
\end{table}

\noindent
The Monte Carlo evidence strongly supports the theoretical structure of the LP solution $\pi^{LP}(x) = 1\{\tau(x) - \lambda^* c(x) + \nu^* \geq 0\}$.
The key driver of misallocation is the interaction between:
(i) cost heterogeneity $c(x)$, and (ii) the coverage shadow price $\nu^*$.

\bigskip
\noindent
Scenario (1) generates large misallocation. Indeed, when $\delta$ is large, costs vary substantially across individuals. When $\rho$ is also large, the coverage constraint is binding, implying $\nu^* > 0$. The LP rule becomes
$ \tau(x) - \lambda^* c(x) + \nu^* \geq 0$, which defines an \emph{affine decision boundary} in $(\tau,c)$ space. In contrast, RC ranks units by
$\frac{\tau(x)}{c(x)}$, which corresponds to a \emph{ratio-based ordering}.
Affine and ratio thresholds are fundamentally different objects. When both $c(x)$ is heterogeneous and $\nu^*$ is non-negligible, these two orderings diverge substantially, generating a large misallocation region. This is exactly what we observe in Table 1 (with misallocation area $\approx 0.152$).

\bigskip
\noindent
Scenario (2) yields negligible misallocation. When $\rho$ is small, the coverage constraint is slack, implying $\nu^* \approx 0$. The LP rule reduces to:
\[
\tau(x) - \lambda^* c(x) \geq 0 \quad \Leftrightarrow \quad \frac{\tau(x)}{c(x)} \geq \lambda^*,
\]
which is a ratio rule identical to RC. Hence, the two policies approximately coincide, and the misallocation area collapses to $\approx 0.001$.

\bigskip
\noindent
Scenarios (3) and (4) yield negligible misallocation as well. When costs are constant ($c(x)=c$), the LP rule becomes:
\[
\tau(x) - \lambda^* c + \nu^* \equiv \tau(x) + \text{constant}.
\]
Thus, the LP policy is equivalent to ranking units by $\tau(x)$.
Similarly, the RC rule reduces to:
\[
\frac{\tau(x)}{c} \propto \tau(x),
\]
which induces the same ordering. Therefore, regardless of whether $\nu^*$ is large or small, the two rules coincide up to negligible differences, as confirmed in Table 1.

\bigskip
\noindent
To conclude, severe misallocation arises only when: (i) cost heterogeneity is present ($\delta > 0$), and (ii) the coverage constraint is binding ($\nu^* \gg 0$). In all other cases, LP and RC are approximately equivalent.\footnote{
By complementary slackness in the Karush–Kuhn–Tucker (KKT) conditions, the dual variable associated with the coverage constraint satisfies
$
\nu^* \cdot \left( \mathbb{E}[\pi^*(X)] - \rho \right) = 0.
$
This implies that if $\nu^* > 0$, then necessarily $\mathbb{E}[\pi^*(X)] = \rho$, i.e., the coverage constraint is binding. Conversely, if $\mathbb{E}[\pi^*(X)] > \rho$, then $\nu^* = 0$, meaning that the constraint is slack and does not affect the optimal allocation. Importantly, a large value of $\rho$ does not, per se, imply that $\nu^* > 0$. Having a positive $\nu^*$ depends on whether the decision-maker find it optimal to treat the minimum selected percentage (binding situation), which is likelier, but not granted, when $\rho$ is high. As a counter example, scenario (3) shows a situation where $\rho$ is high, but $\nu=0$.}

\section{Conclusion}

This paper studies optimal policy learning in the presence of a combined budget and minimum coverage constraints. This setting naturally arises in many real-world allocation problems. We show that the problem admits a knapsack structure and derive a primal--dual characterization of the optimal policy in terms of an affine threshold rule driven by two shadow prices: the budget multiplier and the coverage multiplier.

\bigskip
\noindent
Our analysis highlights a fundamental structural insight. While the budget constraint alone leads to a ratio-based allocation rule, the introduction of a minimum coverage requirement fundamentally alters the geometry of the problem. In particular, when the coverage constraint is binding, the optimal policy depends on an additive shift induced by the coverage shadow price, generating an affine decision boundary that cannot, in general, be replicated by simple ranking rules.

\bigskip
\noindent
We show that the linear programming relaxation of the empirical problem has an $O(1)$ integrality gap, implying that the per-capita difference between the relaxed and discrete solutions vanishes asymptotically. This result provides a rigorous justification for using tractable approximation algorithms in large-scale applications.

\bigskip
\noindent
We then analyze two implementable policies. The Greedy--Lagrangian (GLC) algorithm exploits the dual structure of the problem and achieves asymptotic optimality through a one-dimensional search over the budget multiplier. The rank-and-cut (RC) rule, by contrast, offers a simpler and non-iterative alternative based on cost-effectiveness ranking.

\bigskip
\noindent
A central contribution of the paper is to characterize precisely when this simplification is approximately valid. In particular, both theory and Monte Carlo evidence show that the discrepancies between the optimal policy and the RC rule is likely to arise mainly when two conditions jointly hold: (i) costs are heterogeneous, and (ii) the coverage constraint is binding, implying a strictly positive coverage shadow price. In all other cases, the two rules are approximately equivalent.

\bigskip
\noindent
These findings provide a clear practical message. In many empirically relevant settings, simple ranking-based rules can achieve near-optimal performance at a negligible computational cost. At the same time, the analysis clarifies the limits of such rules and identifies the structural conditions under which more refined algorithms are required.

\bigskip
\noindent
More broadly, the paper contributes to bridging the gap between statistical policy learning and combinatorial optimization, offering a unified framework that is both theoretically grounded and computationally scalable.

\bibliographystyle{apalike}
\bibliography{References.bib}

\newpage

\footnotesize{
\section*{\footnotesize{Appendix 1. A numerical illustration of the GLC algorithm}}
\label{app:glc_example}

This appendix provides a fully worked numerical example of Algorithm~1, namely the Greedy--Lagrangian with minimum coverage and budget (GLC). The purpose is purely illustrative. In particular, the example is constructed so as to highlight a regime shift in the selected policy as the Lagrange multiplier changes. This helps clarify the role of the bisection step and the non-smooth dependence of the selected set on the budget shadow price.

\medskip
\noindent
\textit{Setup of the example}. Consider a sample of $n=6$ units. For each unit $i$, let $v_i$ denote its value and $w_i$ its cost. We take $v = (20,18,14,13,8,7)$, and $w = (10,9,4,4,2,2)$. The total budget is $W_n = 12$, the minimum coverage requirement is $K_n = 2$, and the stopping tolerance is $\varepsilon = 0.05$, with $\varepsilon W_n = 0.6$. The six units are summarized in the following table:

\begin{table}[htbp]
\centering
\footnotesize
\begin{tabular}{c c c}
\hline
Unit $i$ & Value $v_i$ & Cost $w_i$ \\
\hline
1 & 20 & 10 \\
2 & 18 & 9 \\
3 & 14 & 4 \\
4 & 13 & 4 \\
5 & 8  & 2 \\
6 & 7  & 2 \\
\hline
\end{tabular}
\end{table}

\noindent
This example is intentionally designed to feature two qualitatively different groups of units. Units 1 and 2 have very large values, but they are also very expensive. By contrast, units from 3 to 6 have smaller absolute values, but much better value-to-cost trade-offs. As a consequence, for small values of the multiplier $\lambda$, the ranking is dominated by raw values and the expensive units tend to be selected first. For larger values of $\lambda$, costs are penalized more strongly, and the ranking shifts toward more efficient units.

\medskip
\noindent
\textit{Preliminary feasibility check}.
The first step of the algorithm is to verify whether the minimum coverage requirement can be met within the available budget. To do so, one sorts costs in increasing order and sums the $K_n$ smallest ones, thus obtaining $(2,2,4,4,9,10)$. Since $K_n=2$, the minimum feasible coverage cost is $W_{\min K} = 2+2 = 4$. Because $W_{\min K} = 4 \leq 12 = W_n$, the problem is feasible. This check is important because it isolates a purely combinatorial impossibility. If even the two cheapest units could not be jointly afforded, then no policy satisfying the coverage constraint would exist, regardless of the choice of $\lambda$.

\medskip
\noindent
\textit{Interpretation of the bisection step}. Before entering the numerical iterations, it is useful to explain the logic of bisection. The algorithm defines, for a given multiplier $\lambda$, the score
$a_i(\lambda) = v_i - \lambda w_i$.
These scores induce an ordering of the units. The first $K_n$ units in that ranking form the forced core required by the coverage constraint. The algorithm then attempts to augment this core greedily, adding further positively scored units as long as the budget allows. The multiplier $\lambda$ acts as a shadow price of the budget. When $\lambda$ is low, cost is weakly penalized, so expensive high-value units tend to dominate the ranking. When $\lambda$ is high, cost is heavily penalized, and the ranking tilts toward more cost-effective units. Bisection is simply a search procedure over $\lambda$. One starts from an interval $[\lambda_L,\lambda_U]$ known to bracket the relevant solution. At each iteration one computes the midpoint $\lambda_M = \frac{\lambda_L+\lambda_U}{2}$, runs the selection rule at $\lambda_M$, and checks whether the resulting policy is too expensive or leaves too much budget unused. If the selected set is too expensive, then $\lambda_M$ is too small, so the lower bound is increased. If the selected set is too cheap, then $\lambda_M$ is too large, so the upper bound is decreased. Repeating this operation shrinks the interval until the selected budget is sufficiently close to $W_n$. In this sense, bisection is not optimizing directly over subsets. Rather, it is searching for the multiplier that induces a subset with nearly binding budget.

\medskip
\noindent
We are now ready to run iterations. 

\bigskip
\noindent
\textit{Iteration 1: low multiplier and infeasible core}. We first consider a low multiplier, $\lambda = 0.2$.
The corresponding scores are: $a_i = v_i - 0.2\,w_i$. The following table reports the calculations:

\begin{table}[htbp]
\centering
\footnotesize
\begin{tabular}{c c c c}
\hline
Unit $i$ & $v_i$ & $w_i$ & $a_i = v_i - 0.2w_i$ \\
\hline
1 & 20 & 10 & 18.0 \\
2 & 18 & 9  & 16.2 \\
3 & 14 & 4  & 13.2 \\
4 & 13 & 4  & 12.2 \\
5 & 8  & 2  & 7.6 \\
6 & 7  & 2  & 6.6 \\
\hline
\end{tabular}
\end{table}

\noindent
Ranking the units by decreasing score yields: $1 \succ 2 \succ 3 \succ 4 \succ 5 \succ 6$. Since $K_n=2$, the coverage step forces the first two units in this ranking into the selected set:
$
S = \{1,2\}.
$
The corresponding cost and value are
$
B = w_1+w_2 = 10+9 = 19,
$
$
V = v_1+v_2 = 20+18 = 38.
$
This is already infeasible, because
$
B = 19 > 12 = W_n.
$
The result is summarized in the following table:
\begin{table}[h!]
\centering
\footnotesize
\begin{tabular}{l c}
\hline
Selected core $S$ & $\{1,2\}$ \\
Coverage size & $2$ \\
Total cost $B$ & $19$ \\
Total value $V$ & $38$ \\
Feasible? & No \\
\hline
\end{tabular}
\end{table}

\noindent
At this stage the algorithm does not even reach the greedy augmentation step. The reason is structural: the multiplier is too small, so the ranking is too close to a pure ordering by values, and the coverage requirement immediately forces the inclusion of two very expensive units. In terms of the bisection logic, this means that $\lambda$ is too low and should be increased.

\bigskip
\noindent
\textit{Iteration 2: higher multiplier and a regime shift in the ranking}. Now consider a much larger multiplier,
$
\lambda = 1.2.
$
The scores become
$
a_i = v_i - 1.2\,w_i.
$
The following table gives the corresponding values:
\begin{table}[h!]
\centering
\footnotesize
\begin{tabular}{c c c c}
\hline
Unit $i$ & $v_i$ & $w_i$ & $a_i = v_i - 1.2w_i$ \\
\hline
1 & 20 & 10 & 8.0 \\
2 & 18 & 9  & 7.2 \\
3 & 14 & 4  & 9.2 \\
4 & 13 & 4  & 8.2 \\
5 & 8  & 2  & 5.6 \\
6 & 7  & 2  & 4.6 \\
\hline
\end{tabular}
\end{table}

\noindent
Now the ranking changes to
$
3 \succ 4 \succ 1 \succ 2 \succ 5 \succ 6.
$
This is the central feature of the example. The expensive units 1 and 2, which previously occupied the top of the ranking, are now overtaken by units 3 and 4. The reason is that the larger multiplier penalizes cost more heavily, so the ranking is no longer driven by raw values alone, but rather by net value after cost penalization.
The coverage step now selects
$
S = \{3,4\},
$
with
$
B = w_3+w_4 = 4+4 = 8$, and
$V = v_3+v_4 = 14+13 = 27$.
This core is feasible. The following table summarizes the result:

\begin{table}[h!]
\centering
\footnotesize
\begin{tabular}{l c}
\hline
Selected core $S$ & $\{3,4\}$ \\
Coverage size & $2$ \\
Total cost $B$ & $8$ \\
Total value $V$ & $27$ \\
Feasible? & Yes \\
\hline
\end{tabular}
\end{table}

\bigskip
\noindent
\textit{Greedy augmentation at $\lambda = 1.2$}
Since the core is feasible, the algorithm proceeds to the greedy augmentation step. The remaining units are scanned in the score order:
$
1, 2, 5, 6.
$
We evaluate them one by one:
\begin{itemize}

\item Unit 1.
Adding unit 1 would produce
$
B + w_1 = 8 + 10 = 18 > 12,
$
so unit 1 cannot be added.

\item Unit 2.
Adding unit 2 would produce
$
B + w_2 = 8 + 9 = 17 > 12,
$
so unit 2 cannot be added either.

\item Unit 5.
Adding unit 5 gives
$
B + w_5 = 8 + 2 = 10 \leq 12,
$
so unit 5 is admitted. The updated totals are
$
S = \{3,4,5\}$,
$
B = 10
$, and
$V = 35.
$

\item Unit 6.
Adding unit 6 then gives
$
B + w_6 = 10 + 2 = 12 \leq 12,
$
so unit 6 is also admitted. The final selected set becomes
$
S = \{3,4,5,6\},
$
and 
$
B = 12$, and 
$
V = 42.
$ 
\end{itemize}

\noindent
These calculations are summarized in the following table:

\begin{table}[h!]
\centering
\footnotesize
\begin{tabular}{c c c c c}
\hline
Candidate unit & Score rank position & Cost if added & Feasible? & Action \\
\hline
1 & 3rd overall & $8+10=18$ & No  & Reject \\
2 & 4th overall & $8+9=17$  & No  & Reject \\
5 & 5th overall & $8+2=10$  & Yes & Add \\
6 & 6th overall & $10+2=12$ & Yes & Add \\
\hline
\end{tabular}
\end{table}

\noindent
At the end of this step the slack is
$
W_n - B = 12-12 = 0.
$
Since
$
0 \leq \varepsilon W_n = 0.6,
$
the stopping condition is satisfied. Therefore the algorithm terminates with the policy
$
\pi = (0,0,1,1,1,1).
$

\bigskip
\noindent
\textit{Final solution}. The final policy selects units 3, 4, 5, and 6. Its total value and total cost are
$
V = 14+13+8+7 = 42
$ and 
$B = 4+4+2+2 = 12.
$
Thus the budget is exactly exhausted and the coverage requirement is more than satisfied. The following table compares the low-$\lambda$ and high-$\lambda$ regimes:

\begin{table}[h!]
\centering
\footnotesize
\begin{tabular}{c c c c c}
\hline
Multiplier $\lambda$ & Ranking head & Forced core & Total cost of core & Outcome \\
\hline
0.2 & $(1,2,\ldots)$ & $\{1,2\}$ & $19$ & Infeasible \\
1.2 & $(3,4,\ldots)$ & $\{3,4\}$ & $8$  & Feasible, then augmented to $\{3,4,5,6\}$ \\
\hline
\end{tabular}
\end{table}

\bigskip
\noindent
By and large, this example illustrates several important features of the GLC algorithm.

\bigskip
\noindent
First, the minimum coverage requirement is not a minor side condition. It fundamentally shapes the selected policy, because it forces the algorithm to include the top $K_n$ units in the score ranking. If the multiplier is too small, those top-ranked units may be very expensive, making the forced core infeasible even before any greedy augmentation takes place. 

\bigskip
\noindent
Second, the multiplier $\lambda$ has a structural rather than merely marginal role. Increasing $\lambda$ does not simply trim the selected set at the margin. It may reorder the ranking itself, thereby changing the identity of the forced core. In the present example, the core changes from $\{1,2\}$ to $\{3,4\}$ once cost penalization becomes sufficiently strong.

\bigskip
\noindent
Third, the mapping from $\lambda$ to the selected policy is inherently non-smooth. Small changes in $\lambda$ may leave the ranking unchanged for a while, but once a score crossing occurs, the selected set may jump discretely. This is precisely what makes the problem combinatorial. The selected policy is piecewise constant in $\lambda$, with abrupt changes at threshold values where the ranking is reordered.

\bigskip
\noindent
Finally, the example also clarifies the logic of the bisection step. The algorithm is not searching directly over all feasible subsets. Instead, it searches over the one-dimensional dual parameter $\lambda$, which in turn induces a candidate subset through ranking, forced coverage, and greedy augmentation. In this sense, bisection provides a practical way to navigate a difficult discrete problem through a scalar dual variable.}

\footnotesize{
\section*{\footnotesize{Appendix 2. A numerical illustration of the rank-and-cut (RC) algorithm}}
\label{app:rc_example}

This appendix provides a fully worked numerical example of the rank-and-cut (RC) algorithm under budget and minimum coverage constraints. The purpose is purely illustrative. In particular, the example is constructed to highlight how the ratio-based ranking interacts with feasibility constraints and how the selected set emerges from a sequential selection procedure.

\medskip
\noindent
\textit{Setup of the example}. Consider the same sample of $n=6$ units used in Appendix~1. For each unit $i$, let $v_i$ denote its value and $w_i$ its cost. We take
$v = (20,18,14,13,8,7)$ and $w = (10,9,4,4,2,2)$.
The total budget is $W_n = 12$ and the minimum coverage requirement is $K_n = 2$. The data are summarized below:

\begin{table}[htbp]
\centering
\footnotesize
\begin{tabular}{c c c}
\hline
Unit $i$ & Value $v_i$ & Cost $w_i$ \\
\hline
1 & 20 & 10 \\
2 & 18 & 9 \\
3 & 14 & 4 \\
4 & 13 & 4 \\
5 & 8  & 2 \\
6 & 7  & 2 \\
\hline
\end{tabular}
\end{table}

\medskip
\noindent
\textit{Ratio-based ranking}. The RC algorithm ranks units according to their value-to-cost ratio:
\[
r_i = \frac{v_i}{w_i}.
\]
The computed ratios are:

\begin{table}[htbp]
\centering
\footnotesize
\begin{tabular}{c c c c}
\hline
Unit $i$ & $v_i$ & $w_i$ & $r_i = v_i / w_i$ \\
\hline
1 & 20 & 10 & 2.00 \\
2 & 18 & 9  & 2.00 \\
3 & 14 & 4  & 3.50 \\
4 & 13 & 4  & 3.25 \\
5 & 8  & 2  & 4.00 \\
6 & 7  & 2  & 3.50 \\
\hline
\end{tabular}
\end{table}

\noindent
Sorting in decreasing order yields:
\[
5 \succ 3 \succ 6 \succ 4 \succ 1 \succ 2.
\]

\medskip
\noindent
\textit{Step 1: Coverage enforcement}. The RC algorithm first ensures that the minimum coverage requirement is satisfied. Therefore, it selects the first $K_n=2$ units in the ranking:
\[
S = \{5,3\}.
\]
The corresponding totals are:
\[
B = w_5 + w_3 = 2 + 4 = 6, \quad V = v_5 + v_3 = 8 + 14 = 22.
\]

\noindent
This core is feasible, since $B = 6 \leq 12$.

\medskip
\noindent
\textit{Step 2: Greedy augmentation}. The algorithm proceeds by scanning the remaining units in ratio order:
\[
6, 4, 1, 2.
\]

\begin{itemize}

\item Unit 6.  
Adding unit 6 gives:
\[
B + w_6 = 6 + 2 = 8 \leq 12,
\]
so unit 6 is added:
\[
S = \{5,3,6\}, \quad B = 8, \quad V = 29.
\]

\item Unit 4.  
Adding unit 4 gives:
\[
B + w_4 = 8 + 4 = 12 \leq 12,
\]
so unit 4 is added:
\[
S = \{5,3,6,4\}, \quad B = 12, \quad V = 42.
\]

\item Unit 1.  
Adding unit 1 would give:
\[
12 + 10 = 22 > 12,
\]
so it is rejected.

\item Unit 2.  
Adding unit 2 would give:
\[
12 + 9 = 21 > 12,
\]
so it is rejected.

\end{itemize}

\medskip
\noindent
These steps are summarized below:

\begin{table}[h!]
\centering
\footnotesize
\begin{tabular}{c c c c c}
\hline
Candidate unit & Ratio rank position & Cost if added & Feasible? & Action \\
\hline
6 & 3rd & $6+2=8$  & Yes & Add \\
4 & 4th & $8+4=12$ & Yes & Add \\
1 & 5th & $12+10=22$ & No & Reject \\
2 & 6th & $12+9=21$  & No & Reject \\
\hline
\end{tabular}
\end{table}

\medskip
\noindent
\textit{Final solution}. The final selected set is:
\[
S = \{5,3,6,4\},
\]
with total value and cost:
\[
V = 42, \quad B = 12.
\]

\noindent
The budget is exactly exhausted and the coverage requirement is satisfied.

\bigskip
\noindent
\textit{Interpretation}. This example highlights the structural features of the RC algorithm: (i) the ranking is entirely driven by the ratio $v_i/w_i$. Unlike the GLC algorithm, there is no dual parameter that adjusts the trade-off between value and cost. As a consequence, the ordering of units is fixed and does not depend on any tuning parameter; (ii) the minimum coverage requirement enters only through the selection of the first $K_n$ units. Once this core is formed, the algorithm proceeds greedily without revisiting earlier decisions; (iii) the RC algorithm produces a single deterministic allocation given the data. In contrast with GLC, there is no regime shift induced by a changing multiplier. The absence of a dual parameter implies that the mapping from data to policy is smooth in a trivial sense, but potentially suboptimal when compared to LP-based rules.}

\end{document}